\newcommand{\etal}{\textit{et al}. }
\newcommand{\ie}{\textit{i}.\textit{e}., }
\newcommand\AtPageUpperMyright[1]{\AtPageUpperLeft{%
 \put(\LenToUnit{0.5\paperwidth},\LenToUnit{-1cm}){%
     \parbox{0.5\textwidth}{\raggedleft\fontsize{9}{11}\selectfont #1}}%
 }}%
\newcommand{\conf}[1]{%
\AddToShipoutPictureBG*{%
\AtPageUpperMyright{#1}
}
}
\setlist{nolistsep}
\DeclareMathOperator{\diag}{\operatorname{diag}}
\DeclareMathOperator{\spanmath}{\operatorname{span}}
\DeclareMathOperator{\rankmath}{\operatorname{rank}}
\DeclareMathOperator{\detmath}{\operatorname{det}}
\DeclareMathOperator*{\argmin}{\operatorname{arg\,min}}
\newtheorem{theorem}{Theorem}
\newtheorem{lemma}[theorem]{Lemma}
\newtheorem{definition}{Definition}
\begin{document}
%

\title{GraphBGS: Background Subtraction via Recovery of Graph Signals}

\author{\IEEEauthorblockN{Jhony H. Giraldo}
\IEEEauthorblockA{Laboratoire MIA, La Rochelle Université\\
La Rochelle, France\\
Email: jgiral01@univ-lr.fr}
\and
\IEEEauthorblockN{Thierry Bouwmans}
\IEEEauthorblockA{Laboratoire MIA, La Rochelle Université\\
La Rochelle, France\\
Email: tbouwman@univ-lr.fr}}


%


\maketitle
\conf{\textbf{To be published in 2020 25th International Conference on Pattern Recognition (ICPR), Milan, Italy, January 10-15, 2021}}

\begin{abstract}
Background subtraction is a fundamental pre-processing task in computer vision. This task becomes challenging in real scenarios due to variations in the background for both static and moving camera sequences. Several deep learning methods for background subtraction have been proposed in the literature with competitive performances. However, these models show performance degradation when tested on unseen videos; and they require huge amount of data to avoid overfitting. Recently, graph-based algorithms have been successful approaching unsupervised and semi-supervised learning problems. Furthermore, the theory of graph signal processing and semi-supervised learning have been combined leading to new insights in the field of machine learning. In this paper, concepts of recovery of graph signals are introduced in the problem of background subtraction. We propose a new algorithm called Graph BackGround Subtraction (GraphBGS), which is composed of: instance segmentation, background initialization, graph construction, graph sampling, and a semi-supervised algorithm inspired from the theory of recovery of graph signals. Our algorithm has the advantage of requiring less labeled data than deep learning methods while having competitive results on both: static and moving camera videos. GraphBGS outperforms unsupervised and supervised methods in several challenging conditions on the publicly available Change Detection (CDNet2014), and UCSD background subtraction databases.
\end{abstract}


%
\IEEEpeerreviewmaketitle

\section{Introduction}

Background subtraction is an important topic in computer vision and video analysis. There is a large number of applications such as automatic surveillance of human activities in public spaces, intelligent transportation, industrial vision, among others \cite{garcia2020background}. Background subtraction aims to separate the moving objects from the static scene called background; for example it is commonly accepted that background subtraction includes foreground-background classification and moving object detection \cite{bouwmans2014traditional,bouwmans2017decomposition}. In the literature, there have been scientific efforts trying to improve background subtraction methods in complex scenarios, but no method is able to simultaneously address all the challenges in real cases \cite{tezcan2020bsuv,wang2014cdnet}. As a matter of fact, several papers have focused on designing models for specific challenges in background subtraction \cite{oreifej2012simultaneous,pham2014detection,li2018fusion}.

Background subtraction methods can be classified as unsupervised and supervised learning. Unsupervised methods usually rely on background models to predict the foreground, however these methods have problems when applied on complex scenarios \cite{bouwmans2014traditional,bouwmans2014robust}. On the other hand, supervised methods use deep learning networks to predict the foreground \cite{bouwmans2019deep}. Nevertheless, some of these deep learning methods have problems when the models are evaluated on different video sequences in which the neural networks were trained (unseen videos) \cite{tezcan2020bsuv}. For instance, the performance of FgSegNet v2 \cite{lim2019learning} drops dramatically when evaluated on unseen videos as shown in \cite{tezcan2020bsuv}. Moreover, there are still several open issues on background subtraction: 1) neither unsupervised or supervised methods can effectively handle static as well as moving camera sequences, 2) deep learning methods lack of theoretical guarantees, 3) some deep learning background subtraction methods have not documented their results on unseen videos, or their performances are not good in this evaluation setting \cite{choo2018multi,tezcan2020bsuv}, 4) there is not a theoretical answer about the sample complexity required in deep learning regimen \cite{vapnik2013nature,shalev2014understanding}, 5) currently, most of the background subtraction methods in deep learning involve similar images in training and evaluation, \ie they are not evaluated on unseen videos. As a consequence, a real evaluation of the level of overfitting in these algorithms does not exist, particularly when the amount of available information is not huge (as it is the case in current background subtraction databases). In this paper, the problem of background subtraction is addressed using concepts of the emerging field of Graph Signal Processing (GSP).

\begin{figure*}
    \centering
    \includegraphics[width=0.94\textwidth]{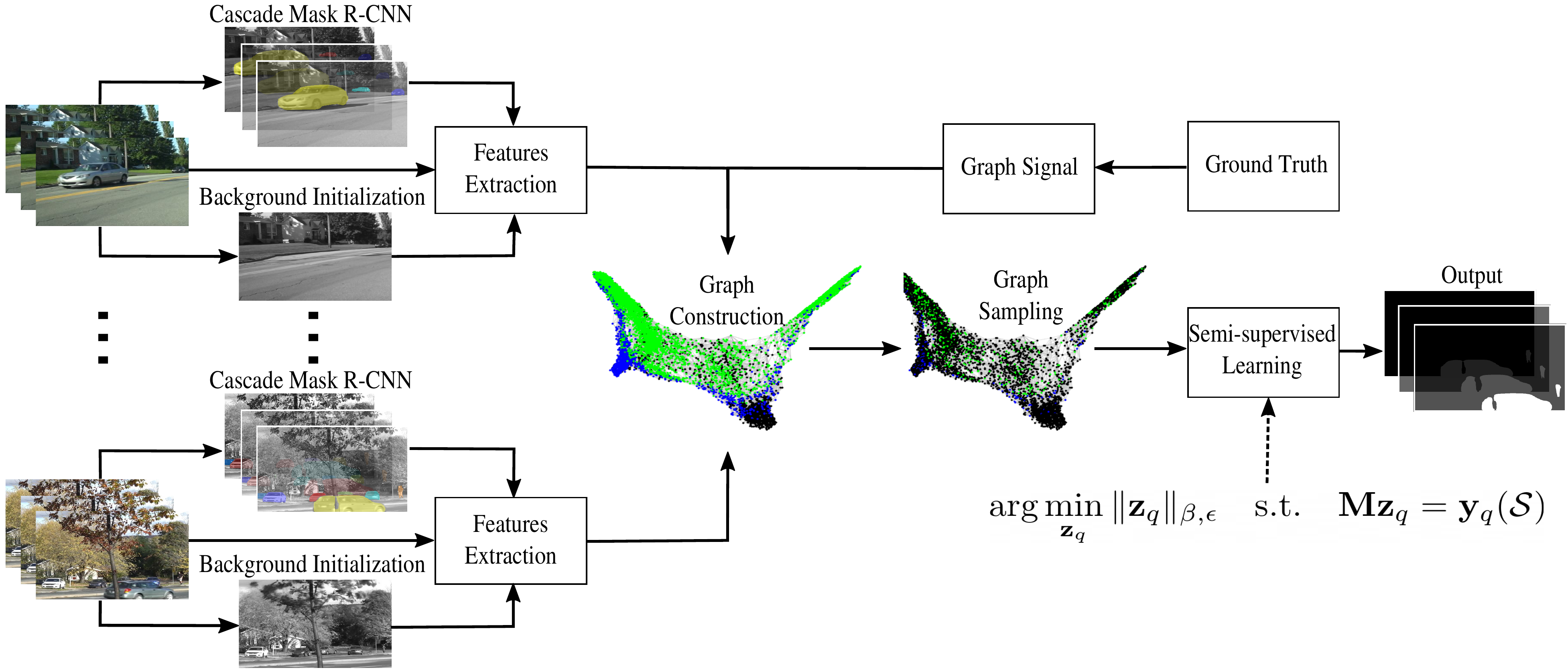}
    \caption{Pipeline of GraphBGS. This algorithm uses temporal median filter as background initialization, and Cascade Mask Region Convolutional Neural Network (R-CNN) \cite{cai2019cascade} to obtain the instances of the videos. Each instance represents a node in the graph, and the representation of each node is obtained with optical flow, intensity, and texture features. Green and blue nodes correspond to moving and static objects, respectively. Black nodes correspond to unknown instances in the database. Finally, some nodes are sampled and the semi-supervised algorithm reconstructs the non-sampled labels in the graph.}
    \label{fig:pipeline}
\end{figure*}

Graphs provide the ability to model interactions of data residing on irregular and complex structures. Social, financial, and sensor networks are examples of data that can be modeled on graphs. GSP is thus an emerging field that extends the concepts of classical digital signal processing to signals supported on graphs \cite{shuman2013emerging,ortega2018graph}, where graph signals are defined as functions over the nodes of a graph. The sampling and recovery of graph signals are fundamental tasks in GSP that have received considerable attention recently \cite{chen2015discrete,anis2016efficient,parada2019blue}. In fact, the problem of semi-supervised learning can be modeled as the reconstruction of a graph signal from its samples \cite{anis2016efficient}.

This work explores a radical departure from prior work by treating background subtraction as a problem of reconstruction of a graph signal from its samples. We propose a new algorithm called Graph BackGround Subtraction (GraphBGS). This algorithm models the instances in videos as nodes of a graph embedded in a high dimensional space, and a graph signal is related to the background or foreground. GraphBGS is composed of instance segmentation, background initialization, construction of a graph, sampling of graph signals, and a reconstruction algorithm to classify if an instance is a moving or a static object. GraphBGS outperforms state-of-the-art methods in several challenging conditions on the Change Detection 2014 (CDNet2014) \cite{wang2014cdnet} and UCSD background subtraction \cite{mahadevan2009spatiotemporal} databases.

The main contribution of this paper are summarized as follows:
\begin{itemize}
    \item Concepts of GSP are introduced for the first time in background subtraction.
    \item The condition number of a perturbed Laplacian matrix is bounded in Theorem \ref{trm:perturbation_matrix}. These bounds show the existence and stability of the inverse of a Sobolev norm in GSP.
    \item The proposed algorithm outperforms state-of-the-art methods in several challenging conditions.
\end{itemize}
Several complex algorithms have been proposed in the literature trying to handle all the challenges in background subtraction, for example the training of big Convolutional Neural Networks (CNN) models \cite{lim2018foreground,zeng2018multiscale,hu20183d}, or multicues techniques \cite{unzueta2011adaptive}. However it is not true, that one single model is able to effectively tackle all the challenges in background subtraction of unseen videos \cite{tezcan2020bsuv}. GraphBGS is a semi-supervised algorithm which lies just in the middle of unsupervised and supervised methods. Unlike other methods of the state-of-the-art, our algorithm: 1) works well for both static and moving camera sequences, 2) does not require a huge amount of data, and 3) is able to adapt to complex scenarios.

The rest of the paper is organized as follows. Section \ref{sec:algorithm} explains the notation, basic concepts, and the proposed algorithm. Section \ref{sec:experimental_framework} introduces the experimental framework of this paper. Finally, Sections \ref{sec:results_and_discussion} and \ref{sec:conclusions} present the results and conclusions, respectively.

\section{Background Subtraction via Reconstruction of Graph Signals}
\label{sec:algorithm}

Our proposed algorithm consists of several components including instance segmentation, background initialization, features extraction, graph construction, and semi-supervised learning as shown in the diagram in Fig. \ref{fig:pipeline}. We first compute the instance segmentation mask using Cascade Mask R-CNN \cite{cai2019cascade}. The instance segmentation algorithm aims to get the relevant object instances in our videos. We also compute a background model using a temporal median filter \cite{piccardi2004background}. Then, we use the background model as well as motion \cite{lucas1981iterative}, texture \cite{ojala2002multiresolution}, and intensity features to represent the nodes of a graph. Finally, we solve a semi-supervised learning problem inspired from the theory of GSP. The novices in GSP are referred to the review papers \cite{shuman2013emerging,ortega2018graph}, as well as the Appendix \ref{app:GSP}.


\subsection{Notation}
\label{sec:notation}

In this paper, upper case boldface letters such as $\mathbf{W}$ represent matrices, and lower case boldface letters such as $\mathbf{y}$ denote vectors. $\mathbf{Y}_{i,:}$ and $\mathbf{Y}_{:,j}$ are the $i$-th row vector and the $j$-th column vector of matrix $\mathbf{Y}$, respectively. $(\cdot)^{\mathsf{T}}$ represents transposition. $\sigma_{\text{max}}(\mathbf{A})$ and $\sigma_{\text{min}}(\mathbf{A})$ represent the maximum and minimum singular values of the matrix $\mathbf{A}$, respectively. Calligraphic letters such as $\mathcal{V}$ denote sets, and $\vert \mathcal{V} \vert$ represents its cardinality. $\diag(\mathbf{x})$ is a diagonal matrix with entries $x_1,x_2,\dots,x_n$. Finally, the $\ell_2$-norm of a vector is written as $\Vert \cdot \Vert_2$, and the inner product between two elements of a Euclidean space as $\langle \cdot,\cdot \rangle$.

\subsection{Preliminaries on Graph Signals}
\label{sec:background}

Let $G=(\mathcal{V},\mathcal{E})$ be an undirected, weighted, and connected graph; where $\mathcal{V}=\{1,\dots,N\}$ is the set of $N$ nodes and $\mathcal{E}=\{(i,j)\}$ is the set of edges. The $2$-tuple $(i,j)$ stands for an edge between two nodes $i$ and $j$. $\mathbf{W} \in \mathbb{R}^{N\times N}$ is the adjacency matrix of the graph such that $\mathbf{W}(i,j)=w_{ij} \in \mathbb{R}^+$ is the weight between nodes $i$ and $j$. As a consequence, $\mathbf{W}$ is symmetric for undirected graphs. A graph signal is a function $y:\mathcal{V}\to \mathbb{R}$ defined on the nodes of $G$, and can be represented as $\mathbf{y} \in \mathbb{R}^N$ where $\mathbf{y}(i)$ is the function evaluated on the $i$-th node. Furthermore, $\mathbf{D} \in \mathbb{R}^{N\times N}$ is a diagonal matrix called the degree matrix of the graph such that $\mathbf{D}(i,i)=\sum_{j=1}^N \mathbf{W}(i,j)~\forall~i = 1,2,\dots,N$. Likewise, the combinatorial Laplacian operator is a positive semi-definite matrix defined as $\mathbf{L} = \mathbf{D}-\mathbf{W}$, and it has eigenvalues $0=\lambda_1 \leq \lambda_2 \leq \dots \leq \lambda_N$ and corresponding eigenvectors $\{ \mathbf{u}_1,\mathbf{u}_2,\dots,\mathbf{u}_N\}$.

The graph Fourier basis of $G$ is defined by the spectral decomposition of $\mathbf{L} = \mathbf{U}\mathbf{\Lambda}\mathbf{U}^{\mathsf{T}}$, where $\mathbf{U}=[\mathbf{u}_1,\mathbf{u}_2,\dots,\mathbf{u}_N]$ and $\mathbf{\Lambda}=\diag(\lambda_1,\lambda_2,\dots,\lambda_N)$. Therefore, the Graph Fourier Transform (GFT) of the signal $\mathbf{y}$ is defined as $\mathbf{\hat{y}}=\mathbf{U}^{{\mathsf{T}}}\mathbf{y}$, and the inverse GFT is given by $\mathbf{y} = \mathbf{U}\mathbf{\hat{y}}$. Using this notion of frequency, Pesenson \cite{pesenson2008sampling} defined the space of all $\omega$\textit{-bandlimited} signals as $PW_{\omega}(G)=\spanmath(\mathbf{U}_{\rho}:\lambda_{\rho} \leq \omega)$, where $\mathbf{U}_{\rho}$ represents the first $\rho$ eigenvectors of $\mathbf{L}$, and $PW_{\omega}(G)$ is known as the Paley-Wiener space of $G$. As a consequence, a graph signal $\mathbf{y}$ has cutoff frequency $\omega$, and bandwidth $\rho$ if $\mathbf{y} \in PW_{\omega}(G)$.

Given the notion of bandlimitedness in terms of $PW_{\omega}(G)$, the next step is to find a bound for the minimum sampling rate allowing perfect recovery of $\mathbf{y} \in PW_{\omega}(G)$. Intuitively, a graph signal is smooth when $\mathbf{y} \in PW_{\omega}(G)$. For instance, suppose a temperature sensor network in any region of the world. The temperature of a certain amount of $\alpha$ nearby cities or localities, represented as the value of a graph signal in $\alpha$ strong connected nodes, should be similar. Then, probably one just needs some of these values of temperature to recover the value of the graph signal in the other nodes. One can also interpret this bound (for perfect recovery) as the minimum amount of labeled nodes required to have perfect classification in semi-supervised learning, given the prior assumption that the classes of the nodes lie in the Paley-Wiener space of the graph. In short, the answer to this question for the bound of minimum sampling rate is that one needs at least $\rho$ (bandwidth) sampled nodes to get perfect reconstruction.

Formally, the sampling rate of a graph signal is defined in terms of a subset of nodes $\mathcal{S} \subset \mathcal{V}$ with $\mathcal{S}=\{s_1,s_2,\dots,s_m\}$, where $m = \vert \mathcal{S}\vert \leq N$ is the number of sampled nodes. The sampled graph signal is defined as $\mathbf{y}(\mathcal{S})=\mathbf{My}$, where $\mathbf{M}$ is a binary decimation matrix whose entries are given by $\mathbf{M} = [\boldsymbol{\delta}_{s_1},\dots,\boldsymbol{\delta}_{s_m}]^{\mathsf{T}}$ and $\boldsymbol{\delta}_{v}$ is the $N-$dimensional Kronecker column vector (or the one-hot vector) centered at $v$. The reconstruction of the graph signal is achieved as $\tilde{\mathbf{y}} = \mathbf{\Phi My}$, where $\mathbf{\Phi} \in \mathbb{R}^{N \times m}$ is an interpolation operator. Perfect reconstruction happens when $\mathbf{\Phi M}$ is the identity matrix. This is not possible in general since $\rankmath(\mathbf{\Phi M}) \leq m \leq N$. However, perfect recovery from $\mathbf{y}(\mathcal{S})$ is possible if the sampling size $\vert \mathcal{S}\vert$ is greater than or equal to $\rho$, \ie $|\mathcal{S}| \geq \rho$ \cite{chen2015discrete}.

\begin{theorem}[Chen's theorem \cite{chen2015discrete}]
    \label{trm:perfect_reconstruction}
    Let $\mathbf{M}$ satisfy $\rankmath(\mathbf{MU}_{\rho})=\rho$. For all $\mathbf{y} \in PW_{\omega}(G)$, perfect recovery, \ie $\mathbf{y}=\mathbf{\mathbf{\Phi My}}$, is achieved by choosing:
    \begin{equation}
        \mathbf{\Phi} = \mathbf{U}_{\rho}\mathbf{V},
        \label{eqn:interpo_perfect_rec}
    \end{equation}
    with $\mathbf{V\Phi U}_{\rho}$ a $\rho\times \rho$ identity matrix.
    \\
    Proof: see \cite{chen2015discrete}.
\end{theorem}

Theorem \ref{trm:perfect_reconstruction} indicates that perfect recovery of graph signals is possible when $\mathbf{y}$ lie in the Paley-Wiener space of the graph, and the number of sampling nodes is at least $\rho$. This reconstruction is achieved by choosing the interpolation operator as in Eqn. (\ref{eqn:interpo_perfect_rec}). The computation of the Laplacian eigenvectors in Eqn. (\ref{eqn:interpo_perfect_rec}) is computationally prohibitive for large graphs. In this paper, the computation of the Laplacian eigenvectors is avoided. For further details, please see Section \ref{sec:reconstruction_algorithm}.

\subsection{Instance Segmentation}

The instance segmentation is obtained using a Cascade Mask Region Convolutional Neural Network (Cascade Mask R-CNN) \cite{cai2019cascade} with a ResNeSt-200 \cite{zhang2020resnest} plus Feature Pyramid Network (FPN) \cite{lin2017feature} as backbone. The Cascade Mask R-CNN is trained in the COCO 2017 dataset \cite{lin2014microsoft}. Each output of the instance segmentation network is a node in the proposed GraphBGS. The instance segmentation network gives a fundamental limitation to GraphBGS: if the Cascade Mask R-CNN does not segment a moving object, GraphBGS will not be able to detect it. Notice however that our pipeline in Fig. \ref{fig:pipeline} could work with other segmentation algorithms such as super-pixel \cite{achanta2012slic}, but the amount of nodes resulting from such an approach would lead to an extremely big graph. Currently, there are no techniques to handle such big graphs in the literature of GSP, but there are promising approaches \cite{parada2019blue,lu2019fast}.

\subsection{Background Initialization and Feature Extraction}
\label{sec:background_estimation_node_representation}

For the sake of simplicity, the computation of the background image is performed using the temporal median filter \cite{piccardi2004background}. However, most advanced methods in background initialization such as \cite{javed2017background,xu2019robust} could be used to improve the overall performance of the algorithm. The videos are processed in gray-scale in this paper.

The representation of the nodes are obtained using optical flow estimation, intensity, and texture features. Let $\mathbf{I}_v^t$ and $\mathbf{I}_v^{t-1}$ be the gray-scale cropped images corresponding to region of interest of the node $v \in \mathcal{V}$ in the current ($t$) and previous ($t-1$) frames, respectively. Let $\mathbf{B}_v$ and $\mathcal{P}_v$ be the cropped image of the background, and set of pixel indices corresponding to the node $v$, respectively. Furthermore, let $\mathbf{v}_x^t(\mathcal{P}_v)$ and $\mathbf{v}_y^t(\mathcal{P}_v)$ be the optical flow vectors of the current image with support in $\mathcal{P}_v$ for the horizontal and vertical direction, respectively, where the optical flow is computed with the Lucas-Kanade method \cite{lucas1981iterative}. $\mathbf{v}_x^t(\mathcal{P}_v)$ and $\mathbf{v}_y^t(\mathcal{P}_v)$ are used to computed histograms and some descriptive statistics (the minimum, maximum, mean, standard deviation, mean absolute deviation, and range). Furthermore, intensity histograms are computed in $\mathbf{I}_v^t(\mathcal{P}_v)$, $\mathbf{I}_v^{t-1}(\mathcal{P}_v)$, $\mathbf{B}_v(\mathcal{P}_v)$ and $\vert \mathbf{I}_v^t(\mathcal{P}_v) - \mathbf{B}_v(\mathcal{P}_v) \vert$. And finally, the texture representation is computed in $\mathbf{I}_v^t$, $\mathbf{I}_v^{t-1}$, $\mathbf{B}_v$ and $\vert \mathbf{I}_v^t - \mathbf{B}_v \vert$, using local binary patterns \cite{ojala2002multiresolution}. The representation of the node is obtained concatenating all the previous features, \ie optical flow, intensity, and texture features. Each instance $v$ is represented by a $853$-dimensional vector $\mathbf{x}_v$.

\subsection{Graph Construction}

Let $\mathbf{X} \in \mathbb{R}^{N\times M}$ be the matrix of $N$ instances, where $\mathbf{X}=[\mathbf{x}_1,\mathbf{x}_2,\dots,\mathbf{x}_N]^{\mathsf{T}}$. Firstly, a k-nearest neighbors algorithm with $\text{k}=30$ is used to connect the nodes in the graph. Afterwards, vertices are connected to get an undirected graph. The weight between two connected nodes $i,j$ is given such that $w_{ij} = \exp{{-\frac{d(i,j)^2}{\sigma^2}}}$, where $d(i,j)=\Vert \mathbf{x}_i - \mathbf{x}_j \Vert$, and $\sigma^2$ is the standard deviation of the Gaussian function computed as $\sigma = \frac{1}{\vert \mathcal{E} \vert + N}\sum_{(i,j) \in \mathcal{E}} d(i,j)$. Notice that the ground truth information is not used for the construction of the graph, but the construction of the graph signals.

\subsection{Graph Signal}

The graph signal in semi-supervised learning is a matrix $\mathbf{Y} \in \mathbb{R}^{N\times Q}$, where $Q$ is the number of classes of the problem, and $\mathbf{Y}_{i,:}~\forall~ i \in \mathcal{V}$ is the representation of certain class in the node $i$. In this paper, each row of $\mathbf{Y}$ encodes the class of each node with the so called one-hot vector, in this case $Q$ is equal to $2$ corresponding to the classes: background $[1,0]$, and foreground $[0,1]$. The decision of whether a node is background or foreground is based on a comparison between the ground-truth and the instances of the videos. The proposed method uses the metrics \textit{intersection over union} and \textit{intersection over node} to determine the foreground and background nodes.

Consider the following definitions: let $\mathcal{GT}^t$ be the set of indices of the foreground pixels in the ground-truth in the current image $t$; let $\mathcal{P}_v$ be the set of indices of the pixels of each output of the Mask R-CNN with $v \in \mathcal{V}$, where the subset of nodes in $\mathcal{V}$ associated with the current image $t$ is given by the masks that lie in such a frame; let $\mathcal{F}^t=\{1,...,\gamma\}$ be the set of isolated regions of the ground-truth in the current image $t$; let $\mathcal{GT}_f^t$ be the set of foreground indices of each isolated region of the ground-truth in the frame $t$ with $f \in \mathcal{F}^t$, \ie $\mathcal{GT}_f^t \subseteq \mathcal{GT}^t$. Using the definitions above, the intersection over node is such that $\xi = \vert \mathcal{I} \vert/\vert \mathcal{P}_v \vert$, where $\mathcal{I}=\mathcal{GT}^t \cap \mathcal{P}_v$; and the intersection over union is such that $\mathbf{u}(f) = \vert \mathcal{I}_f \vert/\vert \mathcal{U}_f \vert~\forall~f \in \mathcal{F}^t$, where $\mathcal{U}_f=\mathcal{GT}_f^t \cup \mathcal{P}_v$, and $\mathcal{I}_f=\mathcal{GT}_f^t \cap \mathcal{P}_v$, \ie each node $v \in \mathcal{V}$ has $\gamma$ values of intersection over union.

One can say that, either if $\mathcal{F} \in \emptyset$, or $\mu=0$, or $\xi=0$, their corresponding nodes $v$ are background, \ie $\mathbf{Y}_{v,:} = [1,0]$. On the contrary, when $\mu > 0.25$, or when $\xi > 0.45$ and $\mu > 0.05$, or when $\xi > 0.9$ and $\mu > 0.02$, their corresponding nodes $v$ are foreground, \ie $\mathbf{Y}_{v,:} = [0,1]$. Otherwise, the corresponding node $v$ is classified as background. The parameters before were found empirically, but these numbers can be learned with a regression method for example. An expert would directly classify if certain node is background or foreground in a new database for background subtraction with semi-supervised learning. As a consequence, the tuning of the parameters for the construction of the graph signal is not a critical step in the pipeline of GraphBGS.

\subsection{Semi-supervised Learning Algorithm}
\label{sec:reconstruction_algorithm}

The semi-supervised learning algorithm of this paper is based on the variational splines of Pesenson \cite{pesenson2009variational}.

\begin{definition}
    \label{dfn:Hilbert_space}
    Let $\mathbf{g}^*(v)$ be the complex conjugate of $\mathbf{g}(v)$. The space $L_2(G)$ is the Hilbert space of all complex-valued functions $\mathbf{f}:\mathcal{V} \rightarrow \mathbb{C}$ in the graph $G$ with $\Vert \mathbf{f} \Vert = \Vert \mathbf{f} \Vert_0 = \left( \sum_{v\in \mathcal{V}} \vert \mathbf{f}(v) \vert^2 \mathbf{D}(v,v) \right)^{1/2}$ and the following inner product:
    \begin{equation}
        \langle \mathbf{f},\mathbf{g} \rangle = \sum_{v\in \mathcal{V}} \mathbf{f}(v)\mathbf{g}^*(v) \mathbf{D}(v,v).
    \end{equation}
\end{definition}

\begin{definition}
    \label{dfn:sobolev_norm}
    For a fixed $\epsilon \geq 0$ the Sobolev norm is introduced by the following formula:
    \begin{equation}
        \Vert \mathbf{f} \Vert_{\beta,\epsilon} = \Vert (\mathbf{L}+\epsilon \mathbf{I})^{\beta/2} \mathbf{f} \Vert, \beta \in \mathbb{R}.
        \label{eqn:sobolev_norm}
    \end{equation}
\end{definition}

Let $\mathbf{y}_q=\mathbf{Y}_{:,q}$ be the graph signal associated with the $q$-th class. Given a subset of vertices $\mathcal{S} \subset \mathcal{V}$, a sampled vector of labels $\mathbf{y}_q(\mathcal{S})=\mathbf{My}_q$, a positive $\beta>0$, and a non-negative $\epsilon \geq 0$, the variational problem for semi-supervised learning is stated as follow: find a vector $\mathbf{z}_q$ from the space $L_2(G)$ with the following properties: $\mathbf{z}_q(\mathcal{S})=\mathbf{Mz}_q=\mathbf{y}_q(\mathcal{S})$, and $\mathbf{z}_q$ minimizes functional $\mathbf{z}_q \rightarrow \Vert (\mathbf{L}+\epsilon \mathbf{I})^{\beta/2}\mathbf{z}_q \Vert$. In other words, the variational problem is trying to solve the following optimization problem:
\begin{gather}
    \nonumber
    \argmin_{\mathbf{z}_q} \Vert \mathbf{z}_q \Vert_{\beta,\epsilon} \quad \text{s.t.} \quad \mathbf{Mz}_q = \mathbf{y}_q(\mathcal{S}) \rightarrow \\
    \label{eqn:variational_problem_rewritten}
    \argmin_{\mathbf{z}_q} \mathbf{z}_q^{\mathsf{T}} (\mathbf{L}+\epsilon \mathbf{I})^{\beta} \mathbf{z}_q \quad \text{s.t.} \quad \mathbf{Mz}_q - \mathbf{y}_q(\mathcal{S}) = 0.
\end{gather}
In the case of background subtraction, Eqn. (\ref{eqn:variational_problem_rewritten}) is solved for $q=1,2$. The variational problem is a convex optimization problem because the term $\mathbf{z}_q^{\mathsf{T}} (\epsilon \mathbf{I}+\mathbf{L})^{\beta} \mathbf{z}_q$ is a quadratic convex function, and the term $\mathbf{Mz}_q - \mathbf{y}_q(\mathcal{S})$ is affine in $\mathbf{z}_q$. Even though $\detmath(\mathbf{L})=0$ for undirected and connected graphs, the term $(\mathbf{L}+\epsilon \mathbf{I})$ is always invertible for $\epsilon>0$. Intuitively, the value $\epsilon$ is related to the stability of the inverse of $(\mathbf{L}+\epsilon \mathbf{I})$.

\begin{theorem}
    \label{trm:perturbation_matrix}
    Given an undirected and connected graph $G$ with combinatorial Laplacian matrix $\mathbf{L}$ such that $\text{rank}(\mathbf{L}) = N-1$, and let $\mathbf{\Psi} \in \mathbb{R}^{N\times N}$ be a perturbation matrix. The summation $\mathbf{L} + \mathbf{\Psi}$ has a lower and upper bound in the condition number in the $\ell_2$-norm such that:
    \begin{equation}
        \frac{\sigma_{\text{max}}(\mathbf{L}+\mathbf{\Psi})}{\sigma_{\text{max}}(\mathbf{\Psi})} \leq \kappa(\mathbf{L}+\mathbf{\Psi}) \leq \frac{\sigma_{\text{max}}(\mathbf{L})+\sigma_{\text{max}}(\mathbf{\Psi})}{\sigma_{min}(\mathbf{L}+\mathbf{\Psi})},
        \label{eqn:condition_number}
    \end{equation}
    where $\kappa(\mathbf{L}+\mathbf{\Psi})$ is the condition number of $\mathbf{L}+\mathbf{\Psi}$.\\
    Proof: see Appendix \ref{app:perturbation_matrix}.
\end{theorem}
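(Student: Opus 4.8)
The plan is to establish the two inequalities of (\ref{eqn:condition_number}) separately, using only elementary facts about singular values of the $N\times N$ real matrices $\mathbf{L}$ and $\mathbf{\Psi}$, together with the structural hypothesis $\rankmath(\mathbf{L})=N-1$. Throughout I use that in the $\ell_2$-norm $\kappa(\mathbf{A})=\sigma_{\text{max}}(\mathbf{A})/\sigma_{\text{min}}(\mathbf{A})$, that $\sigma_{\text{max}}(\mathbf{A})$ equals the spectral (operator) norm $\max_{\Vert\mathbf{x}\Vert_2=1}\Vert\mathbf{A}\mathbf{x}\Vert_2$, and that for a square matrix $\sigma_{\text{min}}(\mathbf{A})=\min_{\Vert\mathbf{x}\Vert_2=1}\Vert\mathbf{A}\mathbf{x}\Vert_2$. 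Note $\kappa(\mathbf{L}+\mathbf{\Psi})$ is finite exactly when $\mathbf{L}+\mathbf{\Psi}$ is invertible, \ie when $\sigma_{\text{min}}(\mathbf{L}+\mathbf{\Psi})>0$, which I assume (otherwise both bounds hold trivially under the convention $\kappa=\infty$).

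For the upper bound I would invoke subadditivity of the spectral norm: for every unit vector $\mathbf{x}$, $\Vert(\mathbf{L}+\mathbf{\Psi})\mathbf{x}\Vert_2\le\Vert\mathbf{L}\mathbf{x}\Vert_2+\Vert\mathbf{\Psi}\mathbf{x}\Vert_2\le\sigma_{\text{max}}(\mathbf{L})+\sigma_{\text{max}}(\mathbf{\Psi})$, and taking the supremum over $\mathbf{x}$ gives $\sigma_{\text{max}}(\mathbf{L}+\mathbf{\Psi})\le\sigma_{\text{max}}(\mathbf{L})+\sigma_{\text{max}}(\mathbf{\Psi})$. Dividing through by $\sigma_{\text{min}}(\mathbf{L}+\mathbf{\Psi})>0$ yields the right-hand inequality of (\ref{eqn:condition_number}).

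For the lower bound I would exploit the singularity of $\mathbf{L}$. Because $\rankmath(\mathbf{L})=N-1$, the Laplacian has a unit null vector $\mathbf{u}_1$ with $\mathbf{L}\mathbf{u}_1=\mathbf{0}$ (the normalized constant vector, $\lambda_1=0$). Hence $\sigma_{\text{min}}(\mathbf{L}+\mathbf{\Psi})\le\Vert(\mathbf{L}+\mathbf{\Psi})\mathbf{u}_1\Vert_2=\Vert\mathbf{\Psi}\mathbf{u}_1\Vert_2\le\sigma_{\text{max}}(\mathbf{\Psi})$; equivalently this is the Weyl-type bound $\sigma_{\text{min}}(\mathbf{L}+\mathbf{\Psi})\le\sigma_{\text{min}}(\mathbf{L})+\sigma_{\text{max}}(\mathbf{\Psi})$ specialized to $\sigma_{\text{min}}(\mathbf{L})=0$, but the kernel-vector argument makes the role of the rank hypothesis explicit. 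Plugging $\sigma_{\text{min}}(\mathbf{L}+\mathbf{\Psi})\le\sigma_{\text{max}}(\mathbf{\Psi})$ into $\kappa(\mathbf{L}+\mathbf{\Psi})=\sigma_{\text{max}}(\mathbf{L}+\mathbf{\Psi})/\sigma_{\text{min}}(\mathbf{L}+\mathbf{\Psi})$ and using monotonicity of $t\mapsto c/t$ gives $\kappa(\mathbf{L}+\mathbf{\Psi})\ge\sigma_{\text{max}}(\mathbf{L}+\mathbf{\Psi})/\sigma_{\text{max}}(\mathbf{\Psi})$, the left-hand inequality.

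I do not expect a genuine obstacle: the argument is short and self-contained, and it uses no symmetry or definiteness of $\mathbf{\Psi}$, so the statement holds for an arbitrary real perturbation. The only care needed is bookkeeping on the denominators: $\sigma_{\text{min}}(\mathbf{L}+\mathbf{\Psi})>0$ is precisely the regime in which the condition number is defined, and $\sigma_{\text{max}}(\mathbf{\Psi})=0$ would force $\mathbf{\Psi}=\mathbf{0}$, hence $\mathbf{L}+\mathbf{\Psi}=\mathbf{L}$ singular and $\kappa=\infty$, still consistent with the lower bound. To close, I would specialize to $\mathbf{\Psi}=\epsilon\mathbf{I}$, the case relevant to (\ref{eqn:variational_problem_rewritten}) with $\beta=1$: then $\sigma_{\text{max}}(\mathbf{\Psi})=\epsilon$ and $\sigma_{\text{min}}(\mathbf{L}+\epsilon\mathbf{I})=\epsilon$, the two bounds collapse to equality $\kappa(\mathbf{L}+\epsilon\mathbf{I})=(\lambda_N+\epsilon)/\epsilon$, and this explicit estimate exhibits the existence and stability of the inverse underlying the Sobolev norm.
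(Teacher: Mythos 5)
Your proposal is correct and follows essentially the same route as the paper's Appendix B proof: the lower bound comes from evaluating $\mathbf{L}+\mathbf{\Psi}$ on a null vector of $\mathbf{L}$ (guaranteed by $\rankmath(\mathbf{L})=N-1$) to get $\sigma_{\text{min}}(\mathbf{L}+\mathbf{\Psi})\leq\sigma_{\text{max}}(\mathbf{\Psi})$, and the upper bound from the triangle inequality for the spectral norm. The paper merely spells out as separate lemmas the submultiplicativity $\Vert\mathbf{\Psi}\mathbf{x}\Vert_2\leq\Vert\mathbf{\Psi}\Vert_2\Vert\mathbf{x}\Vert_2$ and the variational characterization of $\sigma_{\text{min}}$ that you invoke directly.
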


Theorem \ref{trm:perturbation_matrix} provides a lower and upper bound in the condition number\footnote{The condition number $\kappa(\mathbf{A})$ associated with the square matrix $\mathbf{A}$ is a measure of how well or ill conditioned is the inversion of $\mathbf{A}$.} of $\mathbf{L}+\mathbf{\Psi}$, where the lower bound can be achieved by computing the spectral decomposition of $\mathbf{L}$. The addition of a perturbation matrix to the Laplacian matrix is implicitly changing the eigenvalues of $\mathbf{L}$, however Theorem \ref{trm:perturbation_matrix} does not state how these eigenvalues change. In matrix theory, Weyl's inequality \cite{horn2012matrix} is a theorem about how the eigenvalues of a perturbed matrix change.

\begin{theorem}[Weyl's Theorem \cite{horn2012matrix}]
    \label{trm:weyl_theorem}
    Let $\mathbf{L}$ and $\mathbf{\Psi}$ be Hermitian matrices with set of eigenvalues $\{\lambda_1,\lambda_2,\dots,\lambda_N\}$ and $\{\psi_{1},\psi_{2},\dots,\psi_{N}\}$, respectively. The matrix $\mathbf{L} + \mathbf{\Psi}$ has a set of eigenvalues $\{\nu_{1},\nu_{2},\dots,\nu_{N}\}$ where the following inequalities hold (for $i=1,2,\dots,N$) $\lambda_i+\psi_1 \leq \nu_i \leq \lambda_i + \psi_N$.\\
    Proof: see \cite{horn2012matrix}.
\end{theorem}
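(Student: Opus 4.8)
The plan is to derive Weyl's inequalities from the variational (Courant--Fischer) characterization of the eigenvalues of a Hermitian matrix. Adopting the ascending ordering used throughout the paper, I write the eigenvalues of $\mathbf{L}$, $\mathbf{\Psi}$, and $\mathbf{L}+\mathbf{\Psi}$ as $\lambda_1\leq\dots\leq\lambda_N$, $\psi_1\leq\dots\leq\psi_N$, and $\nu_1\leq\dots\leq\nu_N$, and for a nonzero vector $\mathbf{x}$ I denote its conjugate transpose by $\mathbf{x}^{\mathsf{H}}$. The Courant--Fischer theorem states that for any $N\times N$ Hermitian matrix $\mathbf{A}$ with ascending eigenvalues $\alpha_1\leq\dots\leq\alpha_N$,
\begin{equation}
\alpha_i=\min_{\substack{V\subseteq\mathbb{C}^N\\ \dim V=i}}\ \max_{\substack{\mathbf{x}\in V\\ \mathbf{x}\neq\mathbf{0}}}\ \frac{\mathbf{x}^{\mathsf{H}}\mathbf{A}\mathbf{x}}{\mathbf{x}^{\mathsf{H}}\mathbf{x}},
\end{equation}
where the minimum runs over all $i$-dimensional subspaces $V$. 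This is the engine of the whole argument, and I would either cite it as a standard fact or include a one-line reminder of its proof via the Rayleigh quotient.

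The key observation is that the Rayleigh quotient of the perturbation is uniformly pinched between its extreme eigenvalues: since $\mathbf{\Psi}$ is Hermitian, for every nonzero $\mathbf{x}$ one has $\psi_1\leq \mathbf{x}^{\mathsf{H}}\mathbf{\Psi}\mathbf{x}/(\mathbf{x}^{\mathsf{H}}\mathbf{x})\leq\psi_N$. Because the Rayleigh quotient is additive over a sum of matrices, this gives the pointwise two-sided bound
\begin{equation}
\frac{\mathbf{x}^{\mathsf{H}}\mathbf{L}\mathbf{x}}{\mathbf{x}^{\mathsf{H}}\mathbf{x}}+\psi_1\ \leq\ \frac{\mathbf{x}^{\mathsf{H}}(\mathbf{L}+\mathbf{\Psi})\mathbf{x}}{\mathbf{x}^{\mathsf{H}}\mathbf{x}}\ \leq\ \frac{\mathbf{x}^{\mathsf{H}}\mathbf{L}\mathbf{x}}{\mathbf{x}^{\mathsf{H}}\mathbf{x}}+\psi_N
\end{equation}
valid for all nonzero $\mathbf{x}$.

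I would then feed these pointwise bounds into the min-max formula. Taking the maximum over a fixed subspace $V$ and then the minimum over all $i$-dimensional $V$ preserves inequalities, so applying Courant--Fischer to $\mathbf{L}+\mathbf{\Psi}$ and bounding its Rayleigh quotient above by $\mathbf{x}^{\mathsf{H}}\mathbf{L}\mathbf{x}/(\mathbf{x}^{\mathsf{H}}\mathbf{x})+\psi_N$ yields $\nu_i\leq\lambda_i+\psi_N$; bounding it below by $\mathbf{x}^{\mathsf{H}}\mathbf{L}\mathbf{x}/(\mathbf{x}^{\mathsf{H}}\mathbf{x})+\psi_1$ yields $\nu_i\geq\lambda_i+\psi_1$. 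Together these are exactly the claimed inequalities $\lambda_i+\psi_1\leq\nu_i\leq\lambda_i+\psi_N$.

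The main obstacle, really the only subtle point, is the careful handling of the two nested optimizations: one must verify that adding the constant $\psi_1$ (or $\psi_N$) commutes with taking $\max_{\mathbf{x}\in V}$ and $\min_{\dim V=i}$, and that the same family of subspaces indexes the eigenvalues of both $\mathbf{L}$ and $\mathbf{L}+\mathbf{\Psi}$. Since $\psi_1$ and $\psi_N$ are constants independent of $\mathbf{x}$ and of $V$, the $\max$ and $\min$ pass through additively, so no optimality gap is introduced. A final remark would note that because the perturbation $\mathbf{\Psi}$ in our setting need not be positive semi-definite, retaining the full two-sided Weyl bound, rather than a one-sided version, is precisely what is needed to control how the spectrum of $\mathbf{L}+\mathbf{\Psi}$ moves relative to that of $\mathbf{L}$.
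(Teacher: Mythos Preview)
Your proof via the Courant--Fischer min--max characterization is correct and is the standard argument for Weyl's inequalities. Note, however, that the paper does not actually prove this theorem: it simply states the result and defers to the reference \cite{horn2012matrix} with ``Proof: see \cite{horn2012matrix}.'' So there is no in-paper proof to compare against; your argument is essentially the textbook proof one would find in that reference, and it is entirely adequate for the role the theorem plays here (bounding how the eigenvalues of $\mathbf{L}$ shift under the perturbation $\epsilon\mathbf{I}$).
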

In Theorem \ref{trm:weyl_theorem}, if $\mathbf{\Psi} \succ \boldsymbol{0}$, \ie $\psi_i > 0~\forall~1\leq i \leq N$, then this implies that $\nu_i > \lambda_i~\forall~i=1,2,\dots,N$. Weyl's Theorem indicates how the eigenvalues of $\mathbf{L}$ change after adding a perturbation matrix, and it gives insights about the structure of $\mathbf{\Psi}$. It is desirable to have $\detmath(\mathbf{L} + \mathbf{\Psi}) \neq 0$, then $\mathbf{\Psi}$ should be positive definite. Assuming $\mathbf{\Psi}=\epsilon\mathbf{I}$, where $\epsilon \in \mathbb{R}^+$ and $\mathbf{I}$ is the identity matrix, we have $\sigma_{\text{max}}(\epsilon\mathbf{I})=\epsilon$. Furthermore, $\sigma_{\text{min}}(\mathbf{L}+\epsilon\mathbf{I})$ is strictly greater than zero according to Theorem \ref{trm:weyl_theorem}, \ie $\nu_1 > \lambda_1$. Since $\sigma_{\text{min}}(\mathbf{L}+\epsilon\mathbf{I})>0$, then the upper bound in Eqn. (\ref{eqn:condition_number}) for $\mathbf{\Psi} = \epsilon\mathbf{I}$ is:
\begin{equation}
    \kappa(\mathbf{L}+\epsilon\mathbf{I}) \leq \frac{\sigma_{\text{max}}(\mathbf{L})+\epsilon}{\sigma_{min}(\mathbf{L}+\epsilon\mathbf{I})} < \infty.
\end{equation}
The term $(\mathbf{L}+\epsilon \mathbf{I})$ is precisely the first term in the Sobolev norm in Eqn. (\ref{eqn:sobolev_norm}), and the invertible term in the variational problem in Eqn. (\ref{eqn:variational_problem_rewritten}). $\epsilon$ is fundamentally related to how well conditioned is the variational problem. Larger values of $\epsilon$ leads to better condition numbers, but also to larger changes of the eigenvalues of the Laplacian matrix. Since $(\mathbf{L}+\epsilon \mathbf{I})$ is always invertible for $\epsilon>0$, the optimization problem in Eqn. (\ref{eqn:variational_problem_rewritten}) has a closed-form solution given by:
\begin{equation}
    \tilde{\mathbf{Z}} = ((\mathbf{L}+\epsilon \mathbf{I})^{-1})^{\beta}\mathbf{M}^{\mathsf{T}}(\mathbf{M}((\mathbf{L}+\epsilon \mathbf{I})^{-1})^{\beta}\mathbf{M}^{\mathsf{T}})^{-1}\mathbf{Y}(\mathcal{S}),
    \label{eqn:closed_form_solution}
\end{equation}
where $\tilde{\mathbf{Z}}=[\mathbf{z}_1,\mathbf{z}_2]$, and $\mathbf{Y}(\mathcal{S})$ is the sub-matrix of $\mathbf{Y}$ with rows indexed by $\mathcal{S}$. Equation (\ref{eqn:closed_form_solution}) can be used when the amount of nodes $N$ is relatively small.

Experimentally, $\epsilon$ is set to $0.2$ and $\beta=1$ in this paper. The final classification $\tilde{\mathbf{Y}}_{i,:}$ of node $i$ is computed as the one-hot vector centered in the maximum value of the corresponding reconstructed vector $\tilde{\mathbf{Z}}_{i,:}$. In this paper, the optimization problem in Eqn. (\ref{eqn:variational_problem_rewritten}) is solved using the graph signal processing toolbox \cite{perraudin2014gspbox}.

\section{Experimental Framework}
\label{sec:experimental_framework}

This section introduces the databases used in this paper, the evaluation metrics, the experiments, and the implementation details of GraphBGS.

\subsection{Databases}

The Change Detection (CDNet2014)\footnote{\url{http://www.changedetection.net}} \cite{wang2014cdnet} and the UCSD background subtraction \cite{mahadevan2009spatiotemporal} databases were used in this paper. Even though there are other databases for background subtraction \cite{li2016weighted,camplani2017rgb}, CDNet2014 is still the reference in the community. CDNet2014 contains $11$ challenges: bad weather, low frame rate, night videos, PTZ, turbulence, baseline, dynamic background, camera jitter, intermittent object motion, shadow, and thermal. Each challenge contains from $4$ up to $6$ videos, and each video has from $600$ up to $7999$ frames. Every video contains a certain amount of ground-truth frames, in which the ground truth shows the foreground and background. On the other hand, UCSD contains $18$ videos mainly composed of moving camera sequences, with $30$ up to $246$ frames each sequence. Each video of UCSD is partially or fully annotated with pixel-level ground-truth images of foreground and background.


\subsection{Evaluation Metrics}

The F-measure is the metric used to compare GraphBGS with the state-of-the-art methods, and it is defined as follows:
\begin{gather}
    \label{eqn:f-measure}
    \text{F-measure} = 2\frac{\text{Precision}\times \text{Recall}}{\text{Precision}+\text{Recall}},\\
    \text{Recall} = \frac{\text{TP}}{\text{TP}+\text{FN}}, \text{ Precision} = \frac{\text{TP}}{\text{TP}+\text{FP}},
\end{gather}
where TP, FP, and FN are the number of True Positives, False Positives, and False Negatives pixels, respectively.

\begin{table*}[htb]
\centering
\caption{Some visual results on CDNet2014 dataset compared with state-of-the-art methods, from left to right: original images, ground-truth images, SuBSENSE \cite{st2014subsense}, PAWCS \cite{st2015self}, IUTIS-5 \cite{bianco2017combination}, BSUV-Net \cite{tezcan2020bsuv}, and the proposed GraphBGS.}
\begin{tabular}{|lccccccc|}
\hline 
\footnotesize \textbf{Categories} & \scriptsize Original & \scriptsize Ground Truth  & \scriptsize SuBSENSE & \scriptsize PAWCS & \scriptsize IUTIS-5 & \scriptsize BSUV-Net & \scriptsize GraphBGS ({\color[HTML]{FE0000}ours})\\
\hline 
\hline 
\begin{tabular}[l]{@{}l@{}} \small Bad Weather\\ \small Snow Fall\\ \small  in002776\end{tabular} & \parbox[c][13.5mm][c]{18mm}{\includegraphics[height=12.6mm]{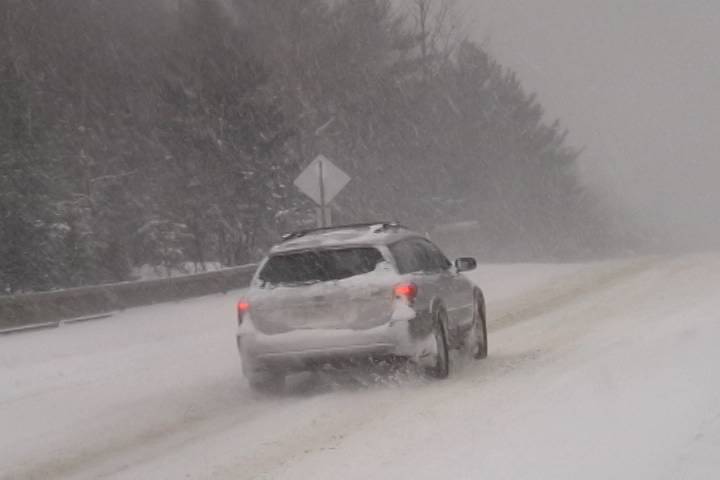}} & \parbox[c][13.5mm][c]{18mm}{\includegraphics[height=12.6mm]{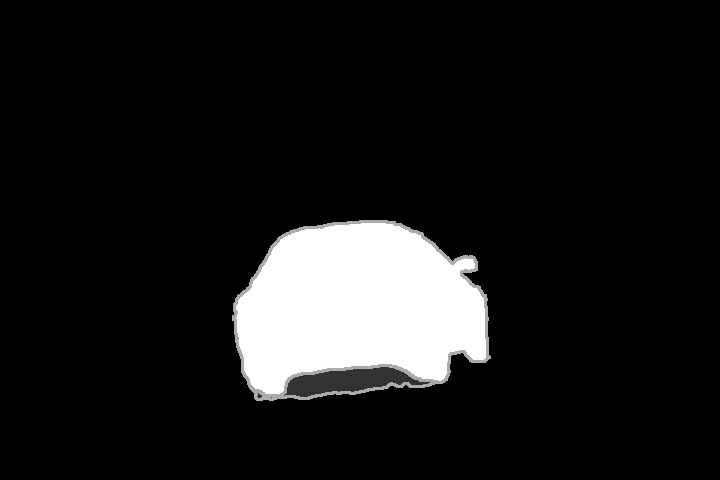}} & \parbox[c][13.5mm][c]{18mm}{\includegraphics[height=12.6mm]{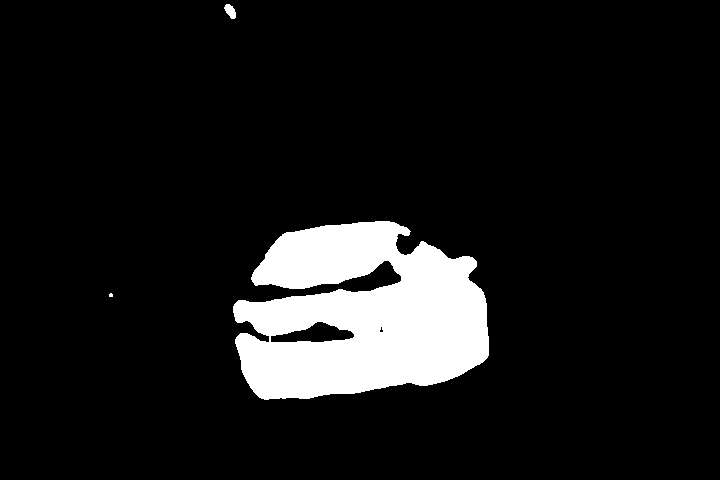}} & \parbox[c][13.5mm][c]{18mm}{\includegraphics[height=12.6mm]{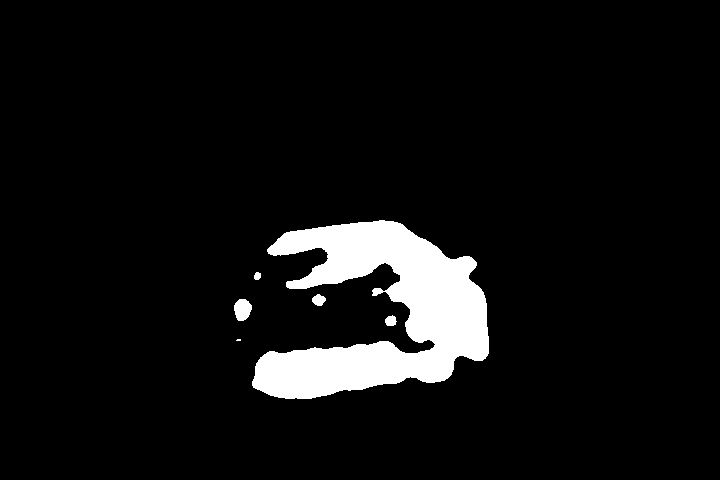}} & \parbox[c][13.5mm][c]{18mm}{\includegraphics[height=12.6mm]{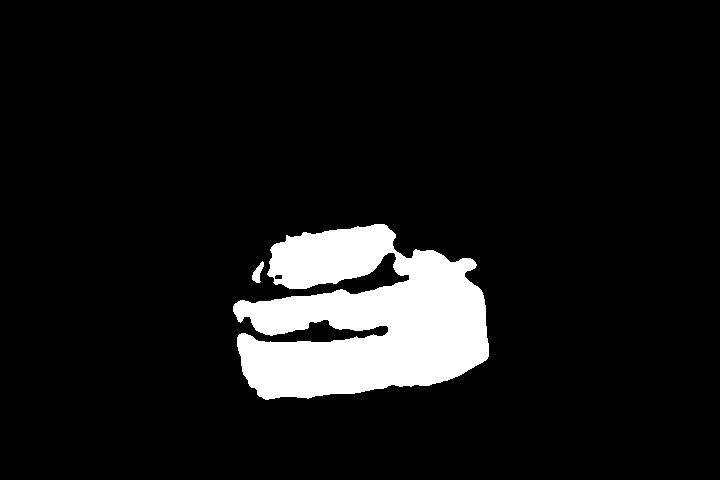}} & \parbox[c][13.5mm][c]{18mm}{\includegraphics[height=12.6mm]{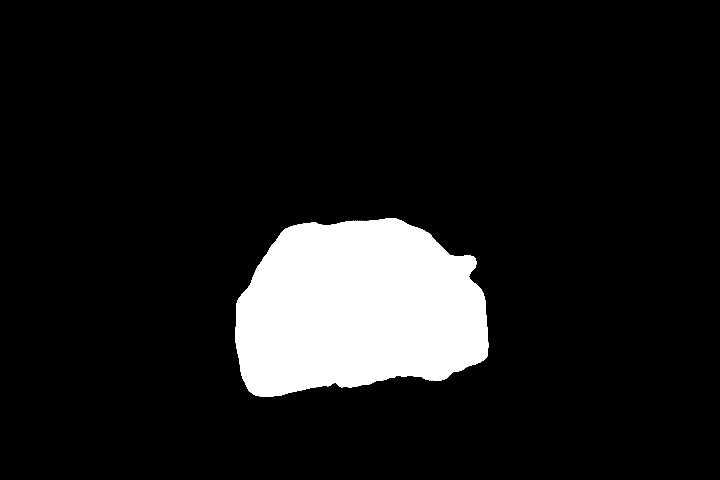}} & \parbox[c][13.5mm][c]{18mm}{\includegraphics[height=12.6mm]{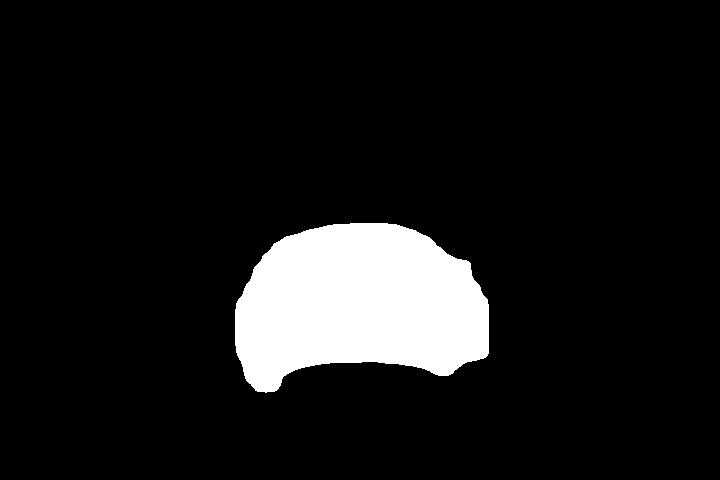}} \\ 
\hline  
\begin{tabular}[l]{@{}l@{}} \small Baseline\\ \small PETS2006\\ \small in000986\end{tabular} & \parbox[c][16mm][c]{18mm}{\includegraphics[height=15.3mm]{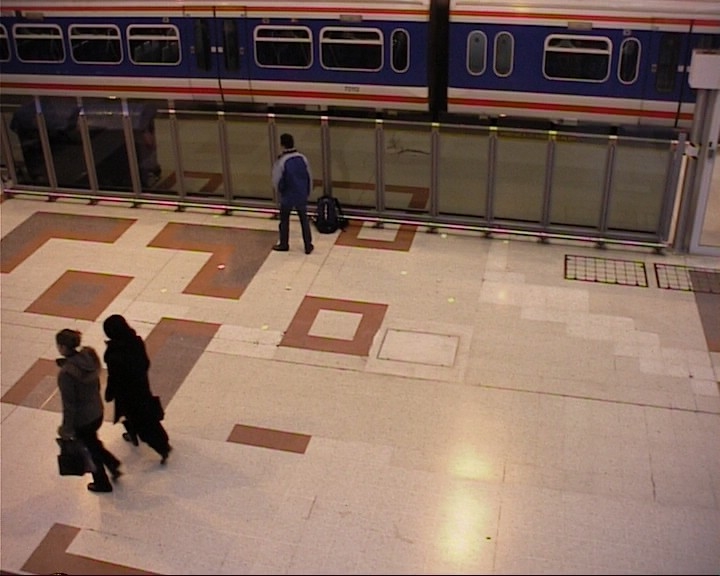}} & \parbox[c][16mm][c]{18mm}{\includegraphics[height=15.3mm]{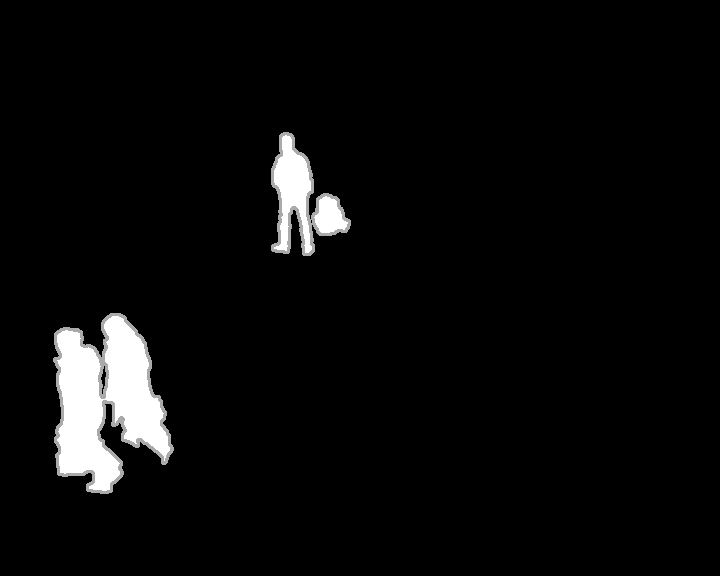}} & \parbox[c][16mm][c]{18mm}{\includegraphics[height=15.3mm]{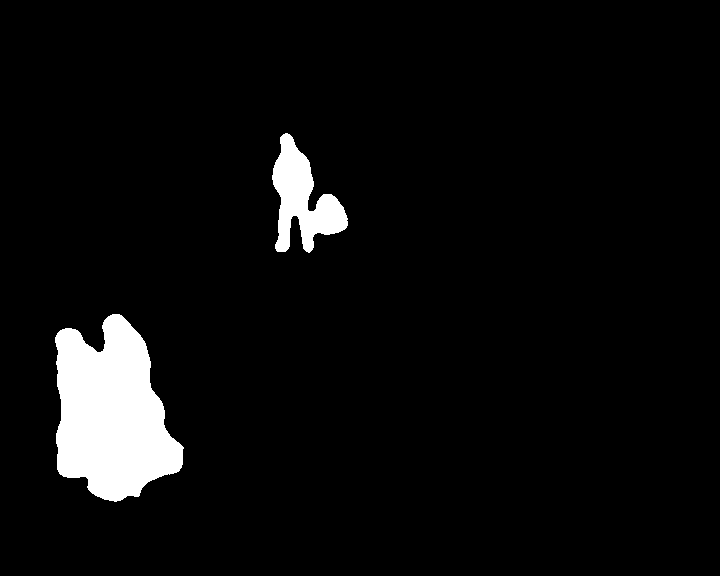}} & \parbox[c][16mm][c]{18mm}{\includegraphics[height=15.3mm]{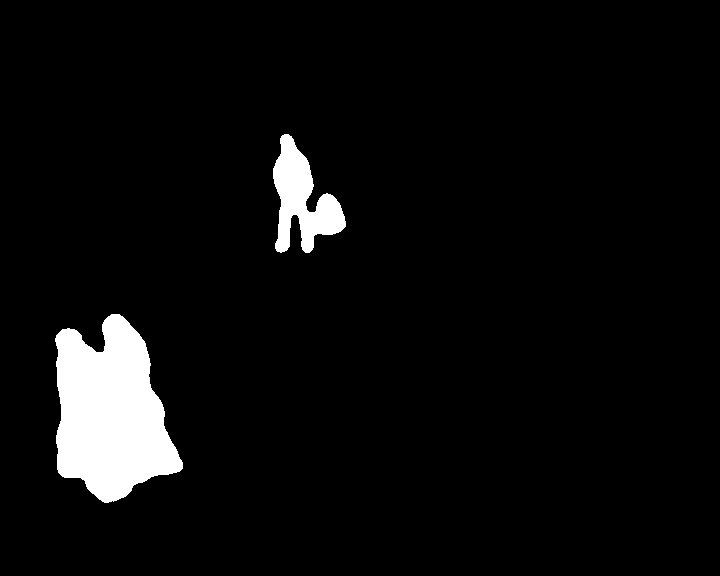}} & \parbox[c][16mm][c]{18mm}{\includegraphics[height=15.3mm]{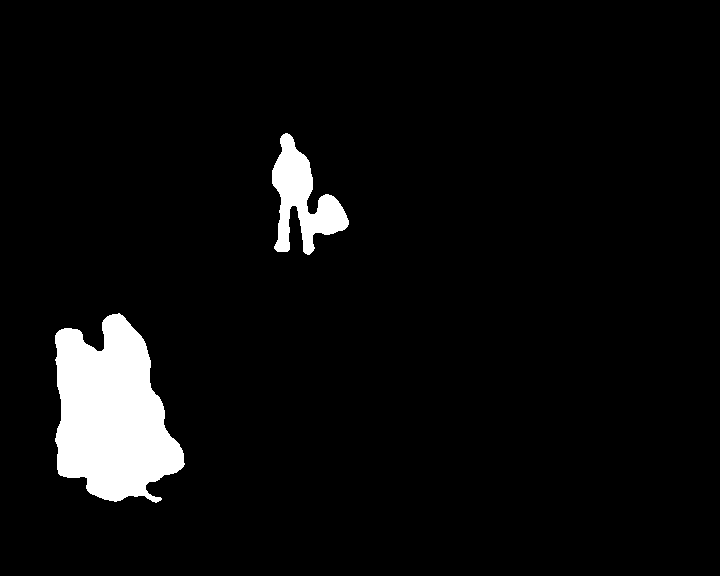}} & \parbox[c][16mm][c]{18mm}{\includegraphics[height=15.3mm]{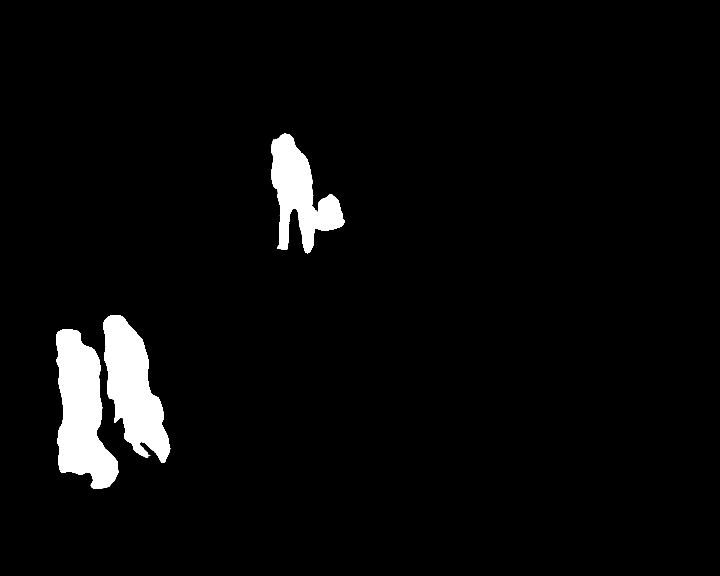}} & \parbox[c][16mm][c]{18mm}{\includegraphics[height=15.3mm]{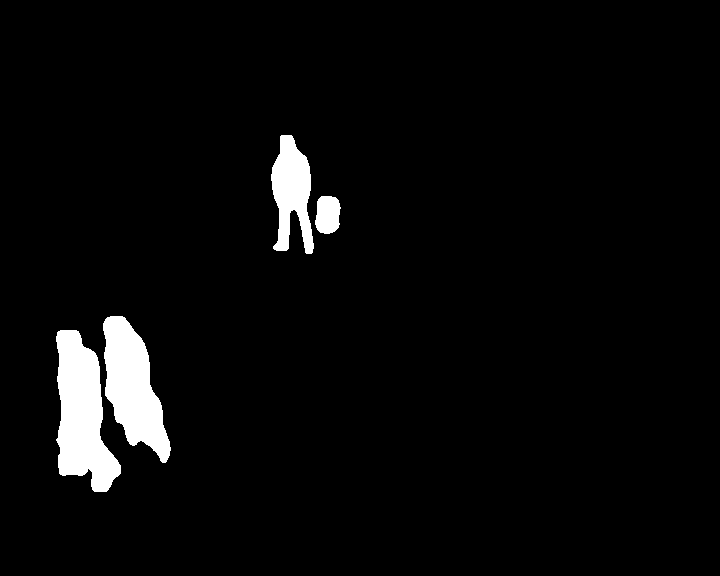}} \\ 
\hline  
\begin{tabular}[l]{@{}l@{}} \small Camera Jitter\\ \small Badminton\\ \small in000980\end{tabular} & \parbox[c][13.5mm][c]{18mm}{\includegraphics[height=12.8mm]{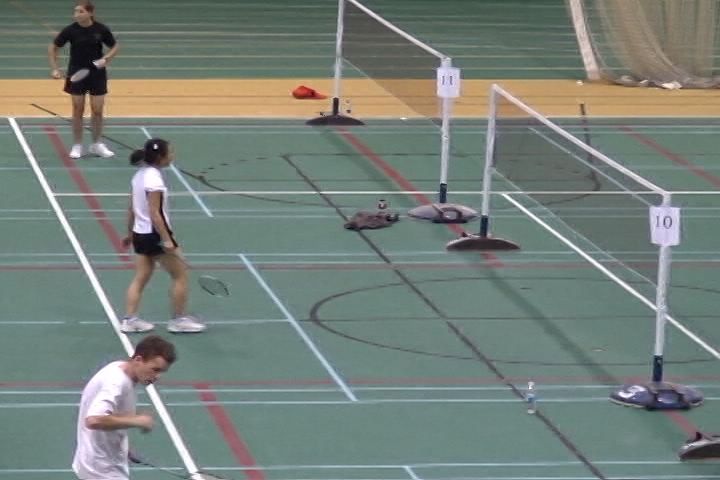}} & \parbox[c][13.5mm][c]{18mm}{\includegraphics[height=12.8mm]{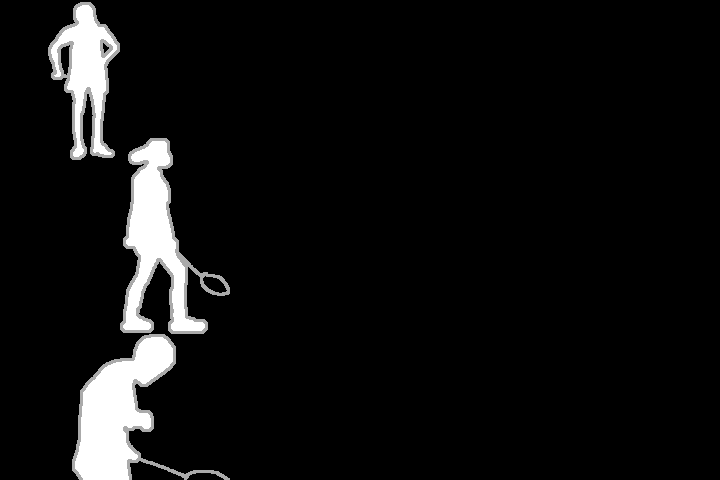}} & \parbox[c][13.5mm][c]{18mm}{\includegraphics[height=12.8mm]{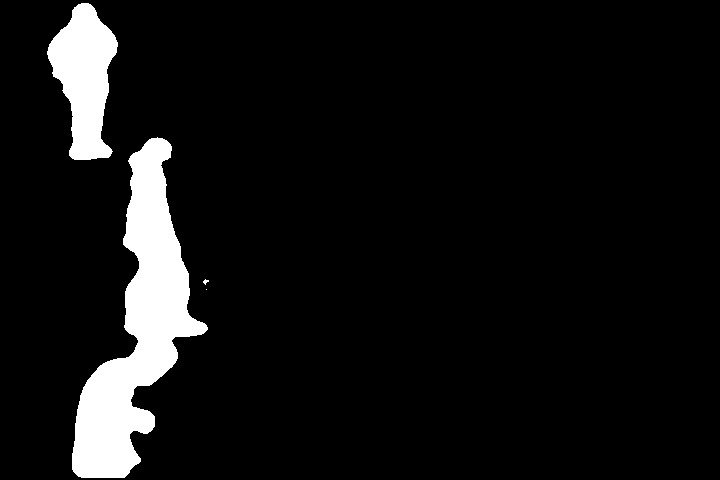}} & \parbox[c][13.5mm][c]{18mm}{\includegraphics[height=12.8mm]{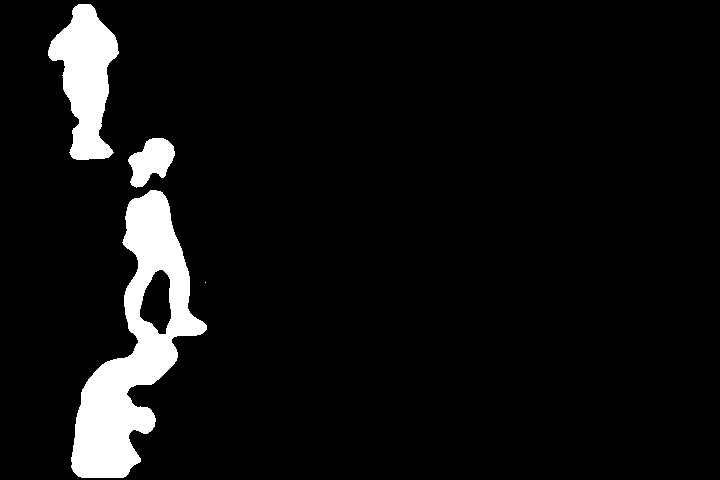}} & \parbox[c][13.5mm][c]{18mm}{\includegraphics[height=12.8mm]{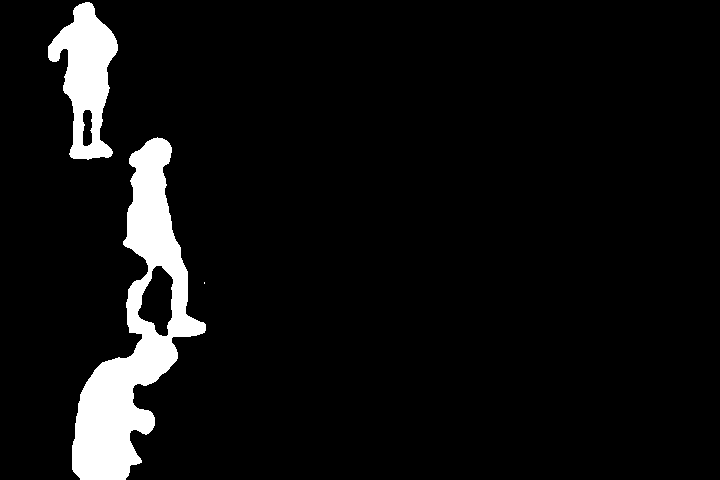}} & \parbox[c][13.5mm][c]{18mm}{\includegraphics[height=12.8mm]{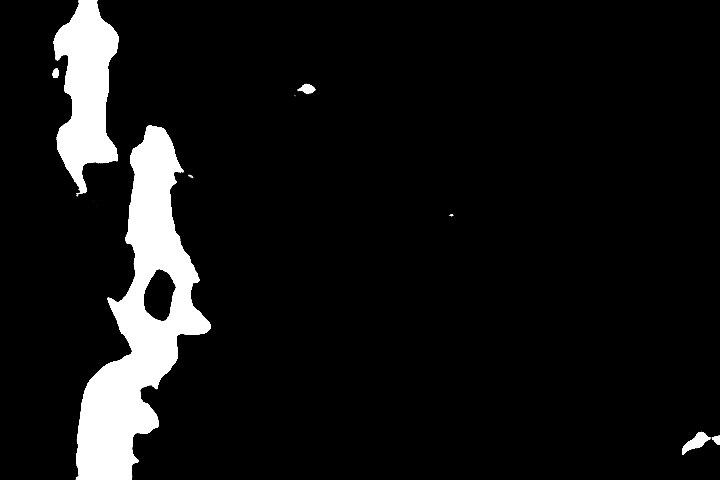}} & \parbox[c][13.5mm][c]{18mm}{\includegraphics[height=12.8mm]{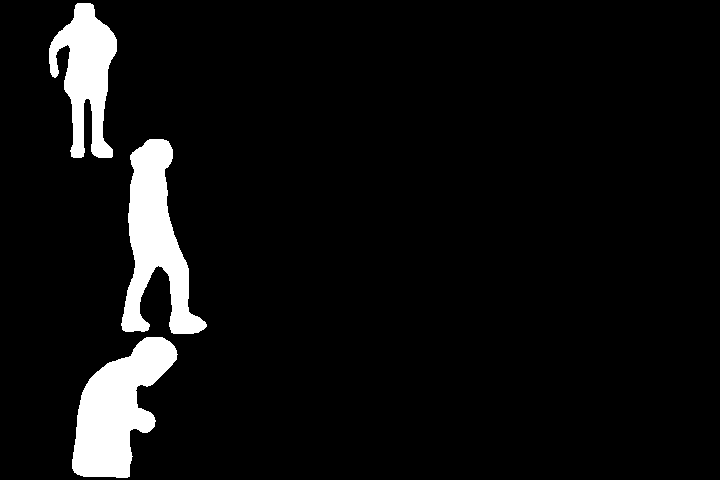}} \\ 
\hline  
\begin{tabular}[l]{@{}l@{}} \small Dynamic-B\\ \small Fall\\ \small in002795\end{tabular} & \parbox[c][13.5mm][c]{18mm}{\includegraphics[height=12.8mm]{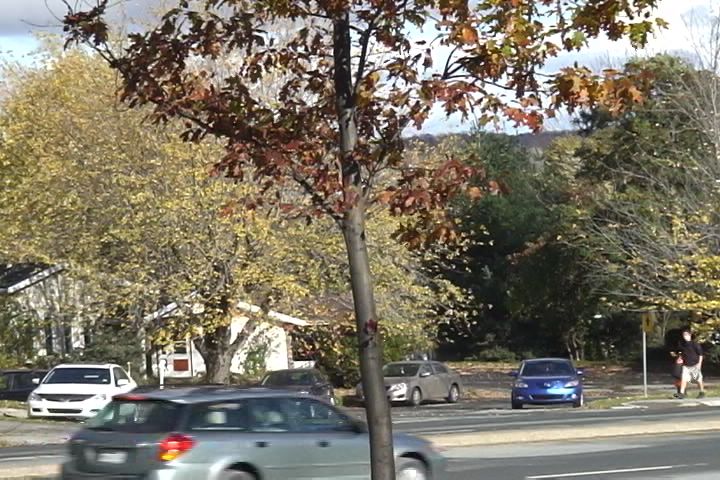}} & \parbox[c][13.5mm][c]{18mm}{\includegraphics[height=12.8mm]{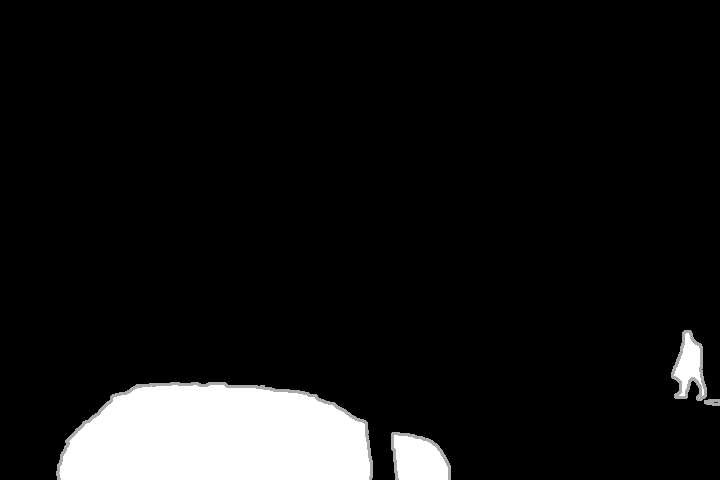}} & \parbox[c][13.5mm][c]{18mm}{\includegraphics[height=12.8mm]{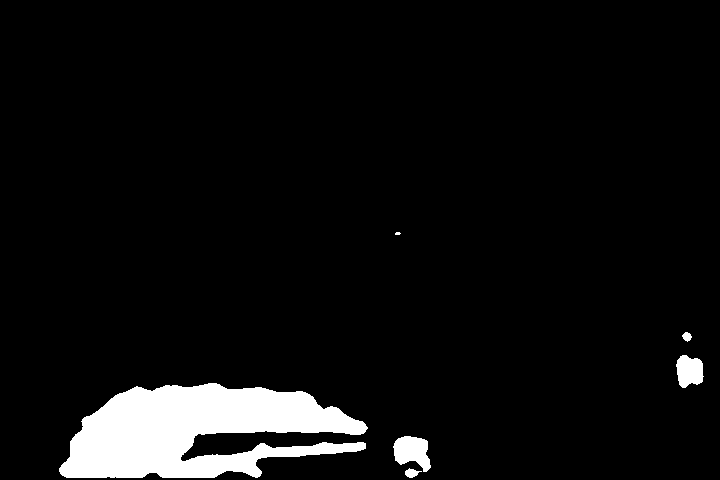}} & \parbox[c][13.5mm][c]{18mm}{\includegraphics[height=12.8mm]{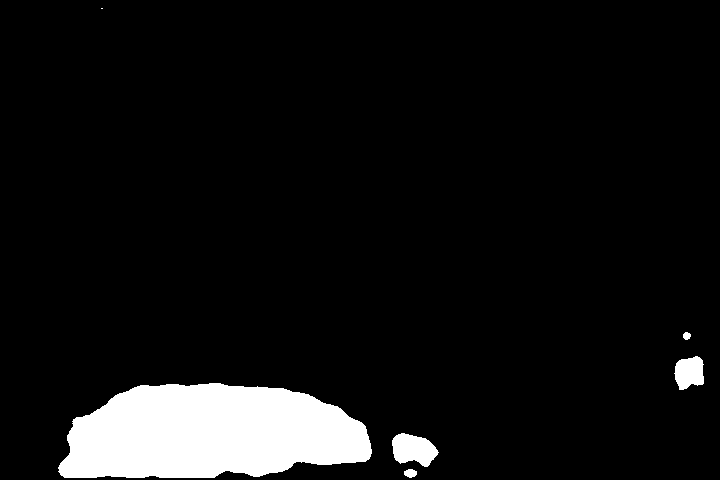}} & \parbox[c][13.5mm][c]{18mm}{\includegraphics[height=12.8mm]{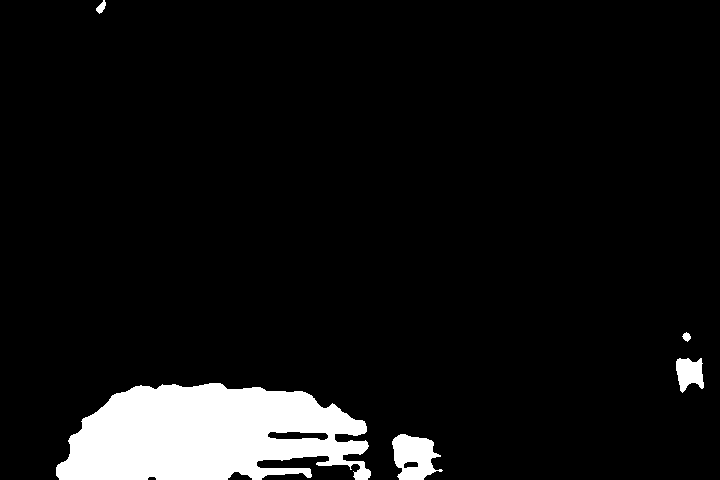}} & \parbox[c][13.5mm][c]{18mm}{\includegraphics[height=12.8mm]{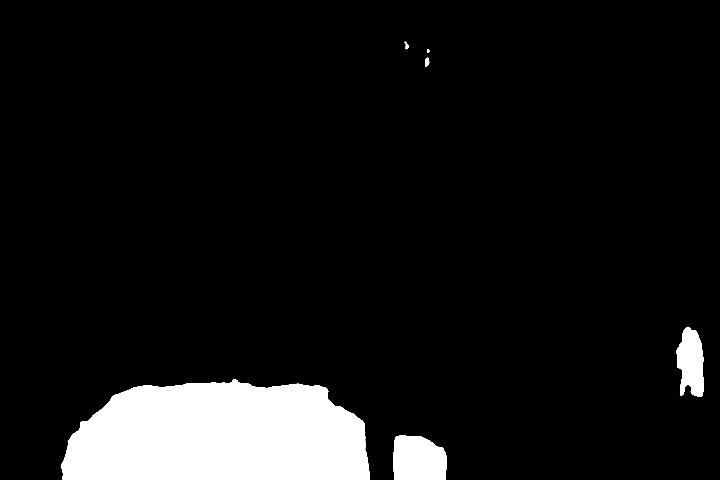}} & \parbox[c][13.5mm][c]{18mm}{\includegraphics[height=12.8mm]{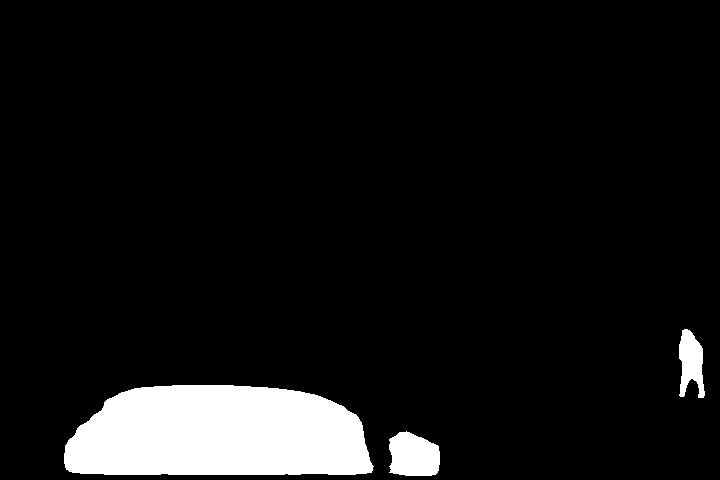}} \\ 
\hline  
\begin{tabular}[l]{@{}l@{}} \small I-O Motion\\ \small Sofa\\ \small in000651\end{tabular} & \parbox[c][15mm][c]{18mm}{\includegraphics[height=14.3mm]{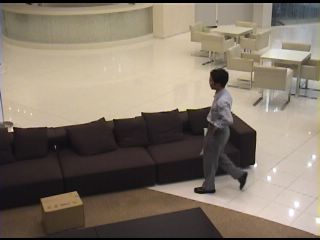}} & \parbox[c][15mm][c]{18mm}{\includegraphics[height=14.3mm]{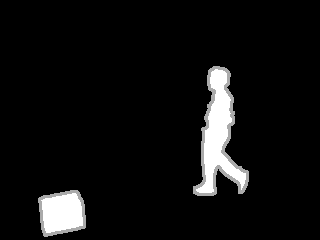}} & \parbox[c][15mm][c]{18mm}{\includegraphics[height=14.3mm]{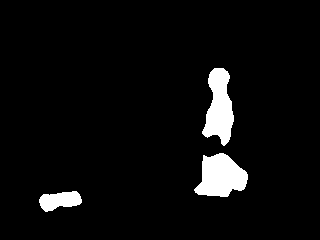}} & \parbox[c][15mm][c]{18mm}{\includegraphics[height=14.3mm]{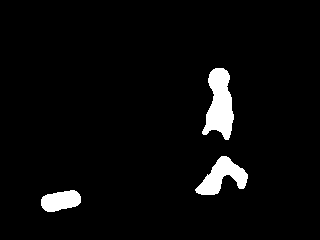}} & \parbox[c][15mm][c]{18mm}{\includegraphics[height=14.3mm]{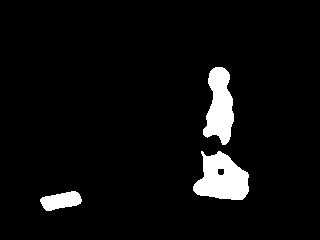}} & \parbox[c][15mm][c]{18mm}{\includegraphics[height=14.3mm]{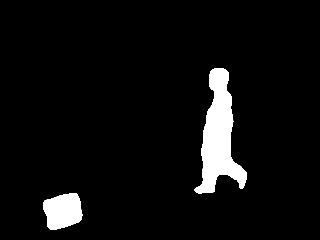}} & \parbox[c][15mm][c]{18mm}{\includegraphics[height=14.3mm]{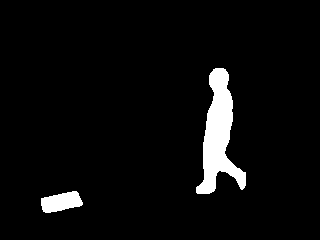}} \\ 
\hline  
\begin{tabular}[l]{@{}l@{}} \small PTZ\\ \small Intermittent-P\\ \small in001873\end{tabular} & \parbox[c][13.5mm][c]{18mm}{\includegraphics[height=12.6mm]{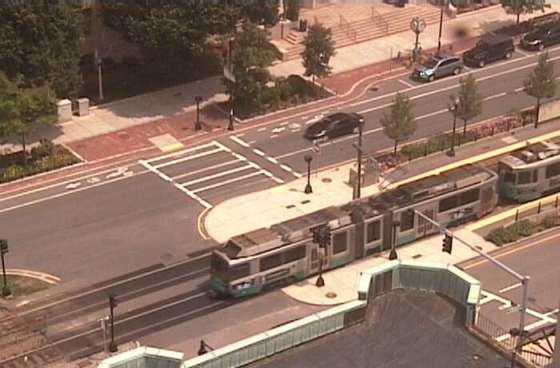}} & \parbox[c][13.5mm][c]{18mm}{\includegraphics[height=12.6mm]{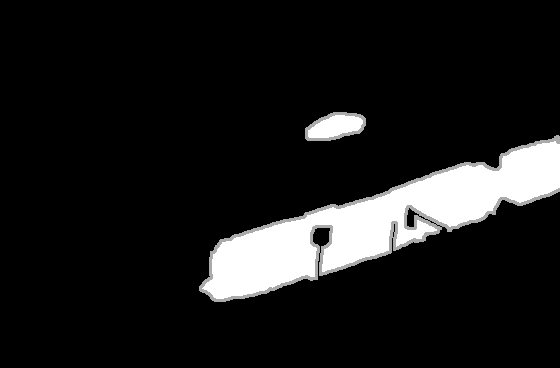}} & \parbox[c][13.5mm][c]{18mm}{\includegraphics[height=12.6mm]{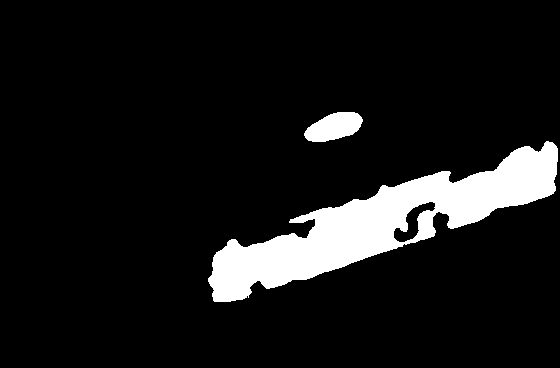}} & \parbox[c][13.5mm][c]{18mm}{\includegraphics[height=12.6mm]{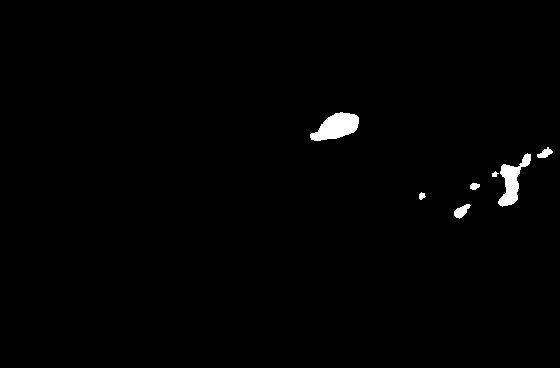}} & \parbox[c][13.5mm][c]{18mm}{\includegraphics[height=12.6mm]{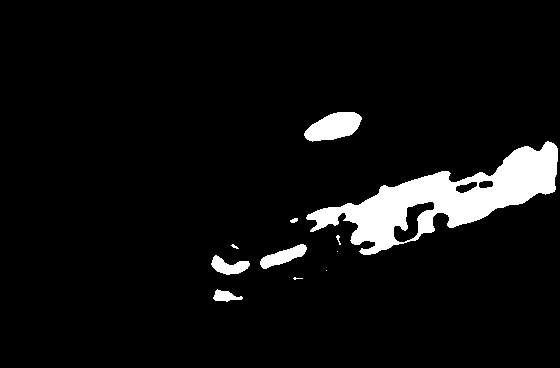}} & \parbox[c][13.5mm][c]{18mm}{\includegraphics[height=12.6mm]{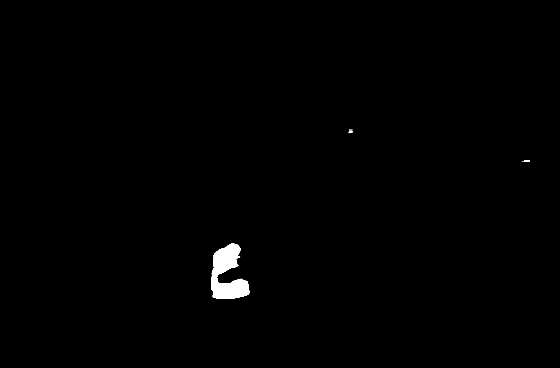}} & \parbox[c][13.5mm][c]{18mm}{\includegraphics[height=12.6mm]{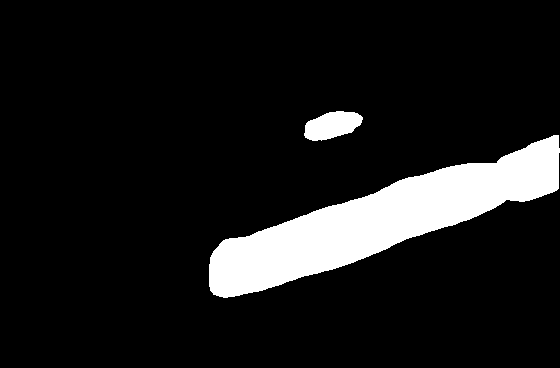}} \\ 
\hline  
\end{tabular}
\label{tbl:visual_results}
\end{table*}

\subsection{Experiments}

There are two experiments in this paper, the first one is related to the CDNet2014, and the second one to the UCSD dataset. In the first experiment, a graph is constructed for the whole CDNet2014, resulting in a graph of $310289$ nodes. In addition, for each sequence of the dataset a percentage of the amount of frames (from $0.1\%$ to $10\%$) in the other sequences is randomly sampled, \ie this amount of images is used in the semi-supervised learning algorithm with its corresponding nodes in an unseen scheme. The second experiment constructs a graph with the whole UCSD and the challenges baseline, dynamic background, PTZ, and shadow from the CDNet2014 database. In this case, a percentage of the frames is extracted from each sequence (from $0.1\%$ to $10\%$), excluding all the sequences of UCSD, resulting in a graph with $N=104261$. Finally, the reconstruction algorithm classifies the non-labeled nodes in both experiments. These procedures are repeated $5$ times in both experiments for each sequence and each sampling density (this procedure is called Monte Carlo cross-validation in the literature of machine learning). Notice for example that CDNet2014 has 53 sequences, \ie for one sampling density the reconstruction is performed $53\times 5$ times. Also, the set of sampling densities in the first experiment is $\mathcal{M}=\{ 0.1\%, 0.5\%, 1\%, 2\%, 4\%, 6\%, 8\%, 10\% \}$, \ie the total amount of reconstructions for the CDNet2014 is $53 \times 5 \times \vert \mathcal{M} \vert = 2120$. For the sake of clarification, the metrics in both experiments are computed on unseen videos.

\subsection{Implementation Details}

The instance segmentation algorithm was implemented using Pytorch and Detectron2 \cite{wu2019detectron2}. The algorithm for reconstruction of graph signals was implemented using the graph signal processing toolbox \cite{perraudin2014gspbox}. For the comparison, most of the background subtraction methods were implemented with the BGSLibrary \cite{sobral2014bgs} and the LRSLibrary \cite{lrslibrary2015} with default parameters by reference.

\section{Results and Discussion}
\label{sec:results_and_discussion}

GraphBGS is compared, either in the first or in the second experiment, with the following methods: FTSG \cite{wang2014static}, SuBSENSE \cite{st2014subsense}, PAWCS \cite{st2015self}, WisenetMD \cite{lee2019wisenetmd}, IUTIS-5 \cite{bianco2017combination}, SemanticBGS \cite{braham2017semantic}, BSUV-Net \cite{tezcan2020bsuv}, DECOLOR \cite{zhou2012moving}, ViBe \cite{barnich2010vibe}, 3WD \cite{oreifej2012simultaneous}, GRASTA \cite{he2012incremental}, ROSL \cite{shu2014robust}, ADMM \cite{parikh2014proximal}, and OR1MP \cite{wang2015orthogonal}. The results of FgSegNet v2 \cite{lim2019learning} with unseen videos evaluation are also reported for reference in CDNet2014. The results of GraphBGS are computed as the best results of F-measure for each sequence in the experiments.

Table \ref{tbl:visual_results} shows some visual results of GraphBGS and other methods. Furthermore, Tables \ref{tbl:CDNet2014_results} and \ref{tbl:UCSD_results} show the comparison of GraphBGS with the state-of-the-art methods in CDNet2014 and UCSD, respectively. The results of challenges turbulence and night videos are not displayed since Cascade Mask R-CNN fails to segment some of the videos. As a consequence, it is not possible to detect the moving objects in some sequences for the semi-supervised learning algorithm. The results of BSUV-Net and FgSegNet v2 in Table \ref{tbl:CDNet2014_results} come from \cite{tezcan2020bsuv}, and the metrics of the other methods come from the change detection website \cite{wang2014cdnet}.

\begin{table*}
\centering
\caption{Comparisons of average F-measure over nine challenges of CDNet2014. GraphBGS is compared with unsupervised and supervised algorithms in background subtraction. The best and second best performing method for each challenge are shown in {\color{red}\textbf{red}} and {\color{blue}\textbf{blue}}, respectively.}
\label{tbl:CDNet2014_results}
\begin{tabular}{l|ccccccccc}
\hline 
Challenge & {FTSG} & {SuBSENSE} & {PAWCS} & {WisenetMD} & {IUTIS-5} & {SemanticBGS} & {FgSegNet v2} & {BSUV-Net} & {GraphBGS} \\
\hline
Bad Weather & 0.8228 & 0.8619 & 0.8152 & 0.8616 & 0.8248 & 0.8260 & 0.7952 & \color{blue}\textbf{0.8713} & \color{red}\textbf{0.9085} \\
Baseline & 0.9330 & 0.9503 & 0.9397 & 0.9487 & 0.9567 & \color{blue}\textbf{0.9604} & 0.6926 & \color{red}\textbf{0.9693} & 0.9535 \\
Camera Jitter & 0.7513 & 0.8152 & 0.8137 & 0.8228 & 0.8332 & \color{blue}\textbf{0.8388} & 0.4266 & 0.7743 & \color{red}\textbf{0.8826} \\
Dynamic-B & 0.8792 & 0.8177 & \color{blue}\textbf{0.8938} & 0.8376 & 0.8902 & \color{red}\textbf{0.9489} & 0.3634 & 0.7967 & 0.8353 \\
I-O Motion & \color{red}\textbf{0.7891} & 0.6569 & 0.7764 & 0.7264 & 0.7296 & \color{blue}\textbf{0.7878} & 0.2002 & 0.7499 & 0.5036 \\
Low-F rate & 0.6259 & 0.6445 & 0.6588 & 0.6404 & \color{blue}\textbf{0.7743} & \color{red}\textbf{0.7888} & 0.2482 & 0.6797 & 0.6022 \\
PTZ & 0.3241 & 0.3476 & 0.4615 & 0.3367 & 0.4282 & 0.5673 & 0.3503 & \color{blue}\textbf{0.6282} & \color{red}\textbf{0.7993} \\
Shadow & 0.8832 & 0.8986 & 0.8913 & 0.8984 & 0.9084 & \color{blue}\textbf{0.9478} & 0.5295 & 0.9233 & \color{red}\textbf{0.9712} \\
Thermal & 0.7768 & 0.8171 & 0.8324 & 0.8152 & 0.8303 & 0.8219 & 0.6038 & \color{blue}\textbf{0.8581} & \color{red}\textbf{0.8594} \\
\hline
Overall & 0.7539 & 0.7566 & 0.7870 & 0.7653 & 0.7923 & \color{red}\textbf{0.8320} & 0.4158 & 0.8056 & \color{blue}\textbf{0.8128} \\
\hline
\end{tabular}
\end{table*}

\begin{table*}[h]
\centering
\caption{Comparison of F-measure results over the videos of UCSD background subtraction dataset. The best and second best performing methods for each video are shown in {\color{red}\textbf{red}} and {\color{blue}\textbf{blue}}, respectively.}
\label{tbl:UCSD_results}
\begin{tabular}{l|ccccccccccc}
\hline
Sequence & {DECOLOR} & {ViBe} & {3WD} & {GRASTA} & {SuBSENSE} & {ROSL} & {ADMM} & {OR1MP} & {BSUV-Net} & {GraphBGS} \\
\hline
Birds & 0.1457 & 0.3354 & 0.1308 & 0.1320 & \color{blue}\textbf{0.4832} & 0.1478 & 0.0227 & 0.1394 & 0.2625 & \color{red}\textbf{0.7143} \\
Boats & 0.2179 & 0.1854 & 0.1576 & 0.0678 & 0.4550 & 0.1637 & 0.1212 & 0.1100 & \color{blue}\textbf{0.6621} & \color{red}\textbf{0.7594} \\
Bottle & 0.4765 & 0.4512 & 0.1364 & 0.1159 & 0.6570 & 0.2069 & {\color{blue}\textbf{0.6589}} & 0.1795 & 0.5039 & \color{red}\textbf{0.8741} \\
Chopper & 0.6214 & 0.4930 & 0.3171 & 0.0842 & \color{blue}\textbf{0.6723} & 0.2920 & 0.1250 & 0.2653 & 0.3020 & \color{red}\textbf{0.6956} \\
Cyclists & 0.2224 & 0.1211 & 0.1003 & 0.1243 & 0.1445 & 0.1366 & 0.1093 & 0.1242 & \color{blue}\textbf{0.4138} & \color{red}\textbf{0.7330} \\
Flock & 0.2943 & 0.2306 & 0.2007 & 0.1612 & 0.2492 & \color{blue}\textbf{0.3409} & 0.1088 & 0.2605 & 0.0025 & \color{red}\textbf{0.5872} \\
Freeway & \color{blue}\textbf{0.5229} & 0.4002 & 0.5028 & 0.0814 & \color{red}\textbf{0.5518} & 0.3875 & 0.0816 & 0.1549 & 0.1185 & 0.3491 \\
Hockey & 0.3449 & 0.4195 & 0.2789 & 0.3149 & 0.3611 & 0.4106 & 0.2981 & 0.4296 & \color{blue}\textbf{0.6908} & \color{red}\textbf{0.7608} \\
Jump & 0.3135 & 0.2636 & 0.2481 & 0.4175 & 0.2295 & 0.4198 & 0.0609 & 0.3073 & \color{red}\textbf{0.8697} & \color{blue}\textbf{0.7727} \\
Landing & 0.0640 & 0.0433 & 0.0457 & 0.0414 & 0.0026 & 0.0506 & \color{blue}\textbf{0.0826} & 0.0442 & 0.0012 & \color{red}\textbf{0.1822} \\
Ocean & 0.1315 & 0.1648 & 0.2055 & 0.1144 & 0.2533 & 0.1422 & 0.1809 & 0.1252 & \color{blue}\textbf{0.5335} & \color{red}\textbf{0.8593} \\
Peds & \color{blue}\textbf{0.7942} & 0.5257 & 0.7536 & 0.4653 & 0.5154 & 0.7418 & 0.6667 & 0.4297 & 0.6738 & \color{red}\textbf{0.8465} \\
Skiing & \color{blue}\textbf{0.3473} & 0.1441 & 0.1981 & 0.0927 & 0.2482 & 0.1942 & 0.0519 & 0.1791 & 0.0602 & \color{red}\textbf{0.5333}\\
Surf & {\color{blue}\textbf{0.0647}} & 0.0462 & 0.0579 & 0.0523 & 0.0467 & 0.0453 & 0.0162 & 0.0317 & 0 & \color{red}\textbf{0.5302} \\
Surfers & 0.1959 & 0.1189 & 0.0962 & 0.0742 & 0.1393 & 0.1184 & 0.1950 & 0.1044 & \color{blue}\textbf{0.4776} & \color{red}\textbf{0.6400} \\
Trafic & \color{blue}\textbf{0.2732} & 0.1445 & 0.2032 & 0.0368 & 0.1165 & 0.1042 & 0.1044 & 0.0882 & 0 & \color{red}\textbf{0.5722} \\
\hline
Overall & 0.3144 & 0.2555 & 0.2271 & 0.1485 & 0.3203 & 0.2439 & 0.1803 & 0.1858 & \color{blue}\textbf{0.3483} & \color{red}\textbf{0.6506} \\
\hline
\end{tabular}
\end{table*}

In the first experiment, GraphBGS outperforms all the state-of-the-art methods in the challenges: bad weather, camera jitter, PTZ, shadow, and thermal, while having the second-best performance in overall. 
On the other hand, GraphBGS does not have good performance in the challenges of intermittent object motion and low frame rate. 
Due to the nature of these two challenges, the objects in these sequences particularly show non-consistent values for the optical flow estimation between two consecutive frames. 
This problem can be alleviated by the introduction of more powerful feature representation, which we leave for future work.

GraphBGS also shows the best results in almost all the videos of UCSD as shown in Table \ref{tbl:UCSD_results}. 
The results in Tables \ref{tbl:CDNet2014_results} and \ref{tbl:UCSD_results} suggest that GraphBGS works well for both static and camera moving sequences. 
Even though GraphBGS is a simple method (temporal median filter, optical flow estimation, local binary patterns) compared to CNNs where millions of parameters are learned, the results of GraphBGS are promising. 
This method could be the first of several semi-supervised learning algorithms that exploit concepts of GSP in the field of background subtraction/moving object detection, and perhaps other domains such as video object segmentation \cite{tokmakov2017learning}, tracking \cite{xu2020train}, among others.


\section{Conclusions}
\label{sec:conclusions}

This paper introduces an algorithm for background subtraction based on the theory of reconstruction of graph signals, and it is dubbed as Graph BackGround Subtraction (GraphBGS). The pipeline of our algorithm involves a Cascade Mask R-CNN as instance segmentation; median filter as background initialization; optical flow, intensity, and texture features for the representation of the nodes in a graph; a k-nearest neighbors for the construction of the graph; and finally a semi-supervised learning algorithm based on reconstruction of graph signals. Unlike most methods of the state-of-the-art, GraphBGS works well for both static and moving camera sequences. Our algorithm outperforms state-of-the-art methods in several challenging conditions of CDNet2014 and UCSD databases.

The present work opens up some new questions for future research. What is the role of graphs and the structure of a given dataset to improve the generalization of current deep learning methods? What is the relationship between the sampling of graph signals and the problem of background subtraction? Perhaps, concepts of GSP such as sampling \cite{anis2018sampling}, learning graphs from data \cite{kalofolias2016learn}, and graph convolutional neural networks \cite{kipf2017semi} could improve the performance of state-of-the-art methods in the field of background subtraction.

\appendices

\section{Graph Signal Processing}
\label{app:GSP}

Graphs provide the ability to model interactions of data residing on irregular and complex structures. Social, financial, and sensor networks are examples of data that can be modeled on graphs. Graph Signal Processing (GSP) is thus an emerging field that extends the concepts of classical digital signal processing to signals supported on graphs \cite{sandryhaila2013discrete,shuman2013emerging,sandryhaila2014big,ortega2018graph}. A graph signal is defined as a function over the nodes of a graph. The sampling and recovery of graph signals are, as expected, fundamental tasks in GSP that have received considerable attention recently. Naturally, the mathematics of sampling theory and spectral graph theory have been combined leading to generalized Nyquist sampling principles for graphs \cite{anis2014towards,shomorony2014sampling,chen2015discrete,anis2016efficient}. In general, these sampling methods are based on the assumption that the underlying graph spectral decompositions are known and available in advance \cite{anis2014towards,chen2015discrete}. This appendix is intended to explain some basic concepts of GSP for novices.

\subsection{Basic Notions}

Let $G=(\mathcal{V},\mathcal{E})$ be an undirected, weighted, and connected graph. $\mathbf{W} \in \mathbb{R}^{N\times N}$ is the adjacency matrix of the graph. 
A graph signal is a function $y:\mathcal{V}\to \mathbb{R}$ defined on the nodes of $G$, and can be represented as $\mathbf{y} \in \mathbb{R}^N$. 
Furthermore, $\mathbf{D} \in \mathbb{R}^{N\times N}$ is the degree matrix of the graph.
Likewise, the combinatorial Laplacian operator $\mathbf{L}$ is a positive semi-definite matrix.
Since $\mathbf{L}$ is positive semi-definite, all the eigenvalues are real non-negative such that: $0=\lambda_1 \leq \lambda_2 \leq \dots \leq \lambda_N$, while there exists a full set of orthogonal eigenvectors $\{ \mathbf{u}_1,\mathbf{u}_2,\dots,\mathbf{u}_N\}$. The graph Fourier basis of $G$ is defined by the spectral decomposition of the Laplacian matrix such that:
\begin{equation}
    \mathbf{L} = \mathbf{U}\mathbf{\Lambda}\mathbf{U}^{\mathsf{T}},
    \label{eqn:eigenbasis}
\end{equation}
where $\mathbf{U}=[\mathbf{u}_1,\mathbf{u}_2,\dots,\mathbf{u}_N]$ and $\mathbf{\Lambda}=\diag(\lambda_1,\lambda_2,\dots,\lambda_N)$. 
The eigenvalues of $\mathbf{L}$ provide a natural way to order the graph Fourier basis in terms of frequency. Therefore, the Graph Fourier Transform (GFT) of a graph signal is defined as $\mathbf{\hat{y}}=\mathbf{U}^{{\mathsf{T}}}\mathbf{y}$, and the inverse GFT is given by $\mathbf{y} = \mathbf{U}\mathbf{\hat{y}}$. More generally, the relationships between the Laplacian eigenvalues and eigenvectors, as well as the structure of a graph are part of the domain of mathematics known as spectral graph theory \cite{chung1997spectral}.

Using the notions of frequency in graphs, Pesenson \cite{pesenson2008sampling} defined the space of all $\omega$\textit{-bandlimited} signals as $PW_{\omega}(G)=\spanmath(\mathbf{U}_{\rho}:\lambda_{\rho} \leq \omega)$, where $\mathbf{U}_{\rho}$ represents the first $\rho$ eigenvectors of $\mathbf{L}$, and $PW_{\omega}(G)$ is known as the Paley-Wiener space.
\begin{definition}
    \label{dfn:bandlimited_signal}
    A graph signal $\mathbf{y}$ is called bandlimited if $\exists~\rho \in \{ 1,2,\dots,N-1 \}$ such that its GFT satisfies $\mathbf{\hat{y}}(i)=0~\forall~i > \rho$.
\end{definition}
Using Definition \ref{dfn:bandlimited_signal} and the notions of Paley-Wiener spaces, a graph signal $\mathbf{y}$ has cutoff frequency $\omega$, and bandwidth $\rho$ if $\mathbf{y} \in PW_{\omega}(G)$.

\subsection{Sampling of Graph Signals}

Given the notion of bandlimitedness in terms of $PW_{\omega}(G)$, the next step is to find a bound for the minimum sampling rate allowing perfect recovery of graph signals $\mathbf{y} \in PW_{\omega}(G)$. 
The sampling rate is defined in terms of a subset of nodes $\mathcal{S} \subset \mathcal{V}$ in $G$ with $\mathcal{S}=\{s_1,s_2,\dots,s_m\}$, where $m = \vert \mathcal{S}\vert \leq N$ is the number of sampled nodes. The sampled graph signal is defined as $\mathbf{y}(\mathcal{S})=\mathbf{My}$, where $\mathbf{M}$ is a binary decimation matrix whose entries are given by $\mathbf{M} = [\boldsymbol{\delta}_{s_1},\dots,\boldsymbol{\delta}_{s_m}]^{\mathsf{T}}$. Let $\mathbf{\Phi} \in \mathbb{R}^{N \times m}$ be an interpolation operator and let $\tilde{\mathbf{y}} = \mathbf{\Phi My}$ be the reconstructed graph signal, perfect recovery happens when $\mathbf{\Phi M}$ is the identity matrix. This is not possible in general since $\rankmath(\mathbf{\Phi M}) \leq m \leq N$. However, perfect recovery from $\mathbf{y}(\mathcal{S})$ is possible if the sampling size $\vert \mathcal{S}\vert$ is greater than or equal to $\rho$, \ie $|\mathcal{S}| \geq \rho$ \cite{chen2015discrete}.

\begin{theorem}[Chen's theorem \cite{chen2015discrete}]
    \label{trm:perfect_reconstruction}
    Let $\mathbf{M}$ satisfy $\rankmath(\mathbf{MU}_{\rho})=\rho$. For all $\mathbf{y} \in PW_{\omega}(G)$, perfect recovery, \ie $\mathbf{y}=\mathbf{\mathbf{\Phi My}}$, is achieved by choosing:
    \begin{equation}
        \mathbf{\Phi} = \mathbf{U}_{\rho}\mathbf{V},
        \label{eqn:interpo_perfect_rec}
    \end{equation}
    with $\mathbf{V\Phi U}_{\rho}$ a $\rho\times \rho$ identity matrix.
    \\
    Proof: see \cite{chen2015discrete}.
\end{theorem}
Theorem \ref{trm:perfect_reconstruction} states that perfect reconstruction of graph signal from its samples is possible when $\mathbf{y}$ lies in $PW_{\omega}(G)$, and the number of samples is at least $\rho$. Then, perfect reconstruction is achieved by choosing the interpolation operator as in Eqn. (\ref{eqn:interpo_perfect_rec}).

\subsection{Reconstruction of Graph Signals}

Reconstruction of graph signals has been studied assuming a bandlimited constraint, \ie $\mathbf{y} \in PW_{\omega}(G)$. Given the sampled graph signal $\mathbf{y}(\mathcal{S})=\mathbf{My}$, a common approach to obtain a reconstructed version of $\mathbf{y}$ is given by:
\begin{equation}
    \mathbf{\tilde{y}} = \argmin_{\mathbf{z}\in \spanmath(\mathbf{U}_{\rho})} || \mathbf{Mz} - \mathbf{y}(\mathcal{S})||_2^2 = \mathbf{U}_{\rho}(\mathbf{MU}_{\rho})^{\dagger}\mathbf{y}(\mathcal{S}),
    \label{eqn:classic_recovery_method}
\end{equation}
where $\mathbf{U}_{\rho}=[\mathbf{u}_1,\mathbf{u}_2,\dots,\mathbf{u}_{\rho}]$ is the matrix formed of the first ${\rho}$ graph's eigenvectors, and $(\mathbf{MU}_{\rho})^{\dagger}$ is the Moore-Penrose pseudo inverse of $\mathbf{MU}_{\rho}$. Notice that $(\mathbf{MU}_{\rho})^{\dagger}$ is precisely $\mathbf{V}$ in Eqn. (\ref{eqn:interpo_perfect_rec}) from Theorem \ref{trm:perfect_reconstruction}. This approach has the significant drawback that if the underlying signal is not strictly bandlimited, the reconstruction losses accuracy rapidly. In practice, graph signals are only approximately bandlimited \cite{anis2016efficient,chen2016signal}, and the calculation of eigendecompositions may be computationally expensive.

Puy \etal \cite{puy2016random} proposed to estimate $\mathbf{\tilde{y}}$ by solving the following problem:
\begin{equation}
    \mathbf{\tilde{y}} = \argmin_{\mathbf{z}} \Vert \mathbf{P}^{-\frac{1}{2}}(\mathbf{Mz-y}(\mathcal{S})) \Vert_2 + \eta \mathbf{z}^{\mathsf{T}}g(\mathbf{L})\mathbf{z},
    \label{eqn:recovery_puy}
\end{equation}
where $\eta \in \mathbb{R}^+$, $g:\mathbb{R} \to \mathbb{R}$ is a nonnegative and nondecreasing polynomial function, and $\mathbf{P} \in \mathbb{R}^{N\times N}$ is a random matrix that is designed jointly with $\mathbf{M}$. The computation of $\mathbf{P}$ requires the spectral decomposition of $\mathbf{L}$; however, Puy \etal proposed an estimation of $\mathbf{P}$ using fast filtering techniques on graphs. The solution of Eqn. (\ref{eqn:recovery_puy}) is given by:
\begin{equation}
    \mathbf{\tilde{y}} = (\mathbf{M}^{\mathsf{T}}\mathbf{P}^{-1}\mathbf{M}+\eta g(\mathbf{L}))^{-1}\mathbf{M}^{\mathsf{T}}\mathbf{P}^{-1}\mathbf{y}({\mathcal{S}}).
    \label{eqn:puy_solution}
\end{equation}
The parameter $\eta$ and the function $g(\cdot)$ in Eqn. (\ref{eqn:puy_solution}) should be selected empirically.

Finally, the readers who are interested in a more detailed and in deep explanations of GSP are referred to the review papers \cite{shuman2013emerging,ortega2018graph}, as well as the papers \cite{chen2015discrete,marques2015sampling,tsitsvero2016signals,anis2016efficient,romero2016kernel,parada2019blue}.

\section{Proof Theorem Perturbation Matrix}
\label{app:perturbation_matrix}

\begin{proof}

For simplicity, and without loss of generality consider the combinatorial Laplacian matrix $\mathbf{L} \in \mathbb{R}^{N \times N}$ of $G$ such that $\text{rank}(\mathbf{L}) = N-1$, \ie $\mathbf{L}$ is ill-conditioned. Since $\mathbf{L}$ does not have full rank $\exists \text{ } \mathbf{x} \neq \mathbf{0} \text{ } \vert \text{ } \mathbf{Lx} = \mathbf{0}$. Then:
\begin{gather}
    \nonumber
    \mathbf{(L+\Psi)x} = \mathbf{\Psi x}\\
    \nonumber
    \Vert \mathbf{(L+\Psi)x} \Vert_2 = \Vert \mathbf{\Psi x} \Vert_2\\
    \label{eqn:relation_L_psi}
    \frac{\Vert \mathbf{(L+\Psi)x} \Vert_2}{\Vert \mathbf{x} \Vert_2} = \frac{\Vert \mathbf{\Psi x} \Vert_2}{\Vert \mathbf{x} \Vert_2},
\end{gather}
where $\mathbf{\Psi} \in \mathbb{R}^{N \times N}$ is a perturbation matrix.

\begin{lemma}
If $\Vert \cdot \Vert$ is a matrix norm induced by a vector norm $\Vert \cdot \Vert$, then:
\begin{equation}
    \Vert \mathbf{A x} \Vert \leq \Vert \mathbf{A} \Vert \cdot \Vert \mathbf{x} \Vert.
\end{equation}
\label{lma:matrix_norm}
\end{lemma}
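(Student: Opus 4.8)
The plan is to read the inequality straight off the definition of an induced matrix norm. Recall that when the matrix norm $\Vert \cdot \Vert$ is induced by a vector norm $\Vert \cdot \Vert$, one has $\Vert \mathbf{A} \Vert = \sup_{\mathbf{x} \neq \mathbf{0}} \tfrac{\Vert \mathbf{Ax} \Vert}{\Vert \mathbf{x} \Vert}$, equivalently $\Vert \mathbf{A} \Vert = \sup_{\Vert \mathbf{x} \Vert = 1} \Vert \mathbf{Ax} \Vert$. So the entire argument reduces to unwinding this supremum, and no inequality-chasing beyond that is needed.

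First I would dispose of the degenerate case $\mathbf{x} = \mathbf{0}$: then $\mathbf{Ax} = \mathbf{0}$, so $\Vert \mathbf{Ax} \Vert = 0 = \Vert \mathbf{A} \Vert \cdot \Vert \mathbf{x} \Vert$ and the claim holds with equality. For $\mathbf{x} \neq \mathbf{0}$, observe that the quotient $\Vert \mathbf{Ax} \Vert / \Vert \mathbf{x} \Vert$ is one of the quantities over which the supremum defining $\Vert \mathbf{A} \Vert$ is taken, hence $\Vert \mathbf{Ax} \Vert / \Vert \mathbf{x} \Vert \leq \Vert \mathbf{A} \Vert$; since $\Vert \mathbf{x} \Vert > 0$, multiplying through gives $\Vert \mathbf{Ax} \Vert \leq \Vert \mathbf{A} \Vert \cdot \Vert \mathbf{x} \Vert$. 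Combining the two cases establishes the bound for every $\mathbf{x}$.

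The only point deserving a word of care — the closest thing to an obstacle — is confirming that $\Vert \mathbf{A} \Vert$ is a well-defined finite number, without which the bound would be vacuous. In the finite-dimensional setting considered here this is automatic: the unit sphere $\{\mathbf{x} : \Vert \mathbf{x} \Vert = 1\}$ is compact and $\mathbf{x} \mapsto \Vert \mathbf{Ax} \Vert$ is continuous, so the supremum is attained and finite, which is precisely what is presupposed by calling $\Vert \cdot \Vert$ a matrix norm induced by a vector norm. With that in hand the lemma follows immediately, and it will then be invoked in the main argument with $\mathbf{A} = \mathbf{\Psi}$ to control $\Vert \mathbf{\Psi x} \Vert$ in Eqn. (\ref{eqn:relation_L_psi}).
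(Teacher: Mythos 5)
Your proof is correct and follows essentially the same route as the paper's: both simply unwind the definition $\Vert \mathbf{A} \Vert = \sup_{\mathbf{x}\neq\mathbf{0}} \Vert\mathbf{Ax}\Vert/\Vert\mathbf{x}\Vert$, note that any particular nonzero $\mathbf{x}$ gives a lower bound for the supremum, and dispose of $\mathbf{x}=\mathbf{0}$ separately. Your added remark on finiteness of the induced norm via compactness of the unit sphere is a harmless (and correct) extra precaution that the paper leaves implicit.
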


\begin{proof}
Since $\Vert \mathbf{A} \Vert = \max_{\mathbf{x} \neq \boldsymbol{0}} \frac{\Vert \mathbf{A x} \Vert}{\Vert \mathbf{x} \Vert}$, for an arbitrary $\mathbf{y} \in \mathbb{R}^N$, with $\mathbf{y} \neq \boldsymbol{0}$:
\begin{gather}
    \nonumber
    \Vert \mathbf{A} \Vert = \max_{\mathbf{x} \neq \boldsymbol{0}} \frac{\Vert \mathbf{A x} \Vert}{\Vert \mathbf{x} \Vert} \geq \frac{\Vert \mathbf{A y} \Vert}{\Vert \mathbf{y} \Vert}\\
    \label{eqn:matrix_norm_inequality}
    \rightarrow \Vert \mathbf{A y} \Vert \leq \Vert \mathbf{A} \Vert \cdot \Vert \mathbf{y} \Vert. 
\end{gather}
This also holds true for $\mathbf{y} = \boldsymbol{0}$.
\end{proof}

Using Eqn. \ref{eqn:relation_L_psi} and Lemma \ref{lma:matrix_norm} we have:
\begin{equation}
    \frac{\Vert \mathbf{(L+\Psi)x} \Vert_2}{\Vert \mathbf{x} \Vert_2} = \frac{\Vert \mathbf{\Psi x} \Vert_2}{\Vert \mathbf{x} \Vert_2} \leq \Vert \mathbf{\Psi} \Vert_2 = \sigma_{\text{max}} (\mathbf{\Psi}),
    \label{eqn:inequality_max_singular_value}
\end{equation}
where $\sigma_{\text{max}} (\mathbf{\Psi})$ is the maximum singular value of $\mathbf{\Psi}$.

\begin{lemma}
    Let $\mathbf{L+\Psi} \in \mathbb{R}^{N\times N}$ be full rank, then:
    \begin{equation}
        \inf_{\mathbf{x} \neq \boldsymbol{0}} \frac{\Vert \mathbf{(L+\Psi)x} \Vert_2}{\Vert \mathbf{x} \Vert_2} = \sigma_{\text{min}}(\mathbf{L+\Psi}),
    \end{equation}
    where $\sigma_{\text{min}}(\mathbf{L+\Psi})$ is the minimum singular value of $\mathbf{L+\Psi}$
    \label{lma:inequality_min_singular_value}
\end{lemma}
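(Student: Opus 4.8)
The plan is to identify the infimum with the smallest eigenvalue of a symmetric positive definite Gram matrix via the Rayleigh--Ritz principle. Since $\mathbf{L}+\mathbf{\Psi}$ is assumed to be full rank, the matrix $(\mathbf{L}+\mathbf{\Psi})^{\mathsf{T}}(\mathbf{L}+\mathbf{\Psi})$ is symmetric positive definite; by definition its eigenvalues are the squared singular values $\sigma_i(\mathbf{L}+\mathbf{\Psi})^2$, all strictly positive, and its smallest eigenvalue equals $\sigma_{\text{min}}(\mathbf{L}+\mathbf{\Psi})^2>0$.

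First I would rewrite, for every $\mathbf{x}\neq\boldsymbol{0}$,
\begin{equation}
    \frac{\Vert (\mathbf{L}+\mathbf{\Psi})\mathbf{x} \Vert_2^2}{\Vert \mathbf{x} \Vert_2^2} = \frac{\mathbf{x}^{\mathsf{T}}(\mathbf{L}+\mathbf{\Psi})^{\mathsf{T}}(\mathbf{L}+\mathbf{\Psi})\mathbf{x}}{\mathbf{x}^{\mathsf{T}}\mathbf{x}},
\end{equation}
which is the Rayleigh quotient of $(\mathbf{L}+\mathbf{\Psi})^{\mathsf{T}}(\mathbf{L}+\mathbf{\Psi})$. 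By the Rayleigh--Ritz theorem, the infimum over $\mathbf{x}\neq\boldsymbol{0}$ of the Rayleigh quotient of a symmetric matrix equals its smallest eigenvalue and is attained at a corresponding eigenvector. Applying this here gives $\inf_{\mathbf{x}\neq\boldsymbol{0}} \Vert (\mathbf{L}+\mathbf{\Psi})\mathbf{x}\Vert_2^2 / \Vert \mathbf{x}\Vert_2^2 = \sigma_{\text{min}}(\mathbf{L}+\mathbf{\Psi})^2$; since $t\mapsto\sqrt{t}$ is monotone on $[0,\infty)$ and the ratio $\Vert (\mathbf{L}+\mathbf{\Psi})\mathbf{x}\Vert_2/\Vert\mathbf{x}\Vert_2$ is nonnegative, taking square roots yields the claimed identity.

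Alternatively, I could give a direct argument from the singular value decomposition $\mathbf{L}+\mathbf{\Psi} = \mathbf{U}\mathbf{\Sigma}\mathbf{V}^{\mathsf{T}}$ with $\mathbf{U},\mathbf{V}$ orthogonal and $\mathbf{\Sigma}=\diag(\sigma_1,\dots,\sigma_N)$, $\sigma_1\geq\dots\geq\sigma_N=\sigma_{\text{min}}(\mathbf{L}+\mathbf{\Psi})>0$: setting $\mathbf{y}=\mathbf{V}^{\mathsf{T}}\mathbf{x}$ preserves the $\ell_2$-norm because $\mathbf{V}$ is orthogonal, and $\Vert(\mathbf{L}+\mathbf{\Psi})\mathbf{x}\Vert_2 = \Vert\mathbf{\Sigma}\mathbf{y}\Vert_2$ because $\mathbf{U}$ is orthogonal, so $\Vert(\mathbf{L}+\mathbf{\Psi})\mathbf{x}\Vert_2^2 = \sum_{i=1}^N \sigma_i^2 y_i^2 \geq \sigma_{\text{min}}^2\Vert\mathbf{y}\Vert_2^2 = \sigma_{\text{min}}^2\Vert\mathbf{x}\Vert_2^2$, with equality when $\mathbf{y}=\boldsymbol{\delta}_N$, i.e.\ when $\mathbf{x}$ is the right singular vector associated with $\sigma_{\text{min}}$. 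Undoing the substitution then exhibits the minimiser explicitly.

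The main point to take care of is not difficulty but rigor: I must exhibit the minimising vector (the eigenvector of the Gram matrix, equivalently the right singular vector of $\mathbf{L}+\mathbf{\Psi}$ for $\sigma_{\text{min}}$) so that the infimum is genuinely attained and equality holds, and I must use the full-rank hypothesis to guarantee $\sigma_{\text{min}}(\mathbf{L}+\mathbf{\Psi})>0$, which is exactly what makes the bound meaningful and lets it close the proof of Theorem~\ref{trm:perturbation_matrix} when combined with Eqn.~(\ref{eqn:inequality_max_singular_value}).
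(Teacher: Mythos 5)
Your proposal is correct, and your second (SVD) argument is essentially the paper's own proof: the same decomposition $\mathbf{L}+\mathbf{\Psi}=\mathbf{\tilde{U}}\mathbf{\Sigma}\mathbf{V}^{\mathsf{T}}$, the same change of variable $\mathbf{y}=\mathbf{V}^{\mathsf{T}}\mathbf{x}$ using unitary invariance, and the same termwise bound $\sum_i\sigma_i^2 y_i^2\geq\sigma_{\text{min}}^2\Vert\mathbf{y}\Vert_2^2$; the Rayleigh--Ritz version is just a repackaging of this via the Gram matrix. If anything you are slightly more careful than the paper, since you explicitly exhibit the minimiser $\mathbf{y}=\boldsymbol{\delta}_N$ so that the infimum is attained, a step the paper's proof passes over.
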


\begin{proof}
    Using the singular value decomposition $\mathbf{L+\Psi} = \mathbf{\tilde{U}\Sigma V}^{\mathsf{T}}$, and the fact that $\mathbf{\tilde{U}}$ and $\mathbf{V}$ are unitary matrices we have:
    \begin{gather}
        \nonumber
        \inf_{\mathbf{x} \neq \boldsymbol{0}} \frac{\Vert \mathbf{(L+\Psi)x} \Vert_2}{\Vert \mathbf{x} \Vert_2} = \inf_{\mathbf{x} \neq \boldsymbol{0}} \frac{\Vert \mathbf{\tilde{U}\Sigma V}^{\mathsf{T}}\mathbf{x} \Vert_2}{\Vert \mathbf{x} \Vert_2} = \inf_{\mathbf{x} \neq \boldsymbol{0}} \frac{\Vert \mathbf{\Sigma V}^{\mathsf{T}}\mathbf{x} \Vert_2}{\Vert \mathbf{x} \Vert_2}\\
        \nonumber
        \rightarrow \inf_{\mathbf{x} \neq \boldsymbol{0}} \frac{\Vert \mathbf{\Sigma V}^{\mathsf{T}}\mathbf{x} \Vert_2}{\Vert \mathbf{x} \Vert_2} = \inf_{\mathbf{y} \neq \boldsymbol{0}} \frac{\Vert \mathbf{\Sigma}\mathbf{y} \Vert_2}{\Vert \mathbf{Vy} \Vert_2} = \inf_{\mathbf{y} \neq \boldsymbol{0}} \frac{\Vert \mathbf{\Sigma}\mathbf{y} \Vert_2}{\Vert \mathbf{y} \Vert_2}\\
        \nonumber
        \rightarrow \inf_{\mathbf{y} \neq \boldsymbol{0}} \frac{\Vert \mathbf{\Sigma}\mathbf{y} \Vert_2}{\Vert \mathbf{y} \Vert_2}=\inf_{\mathbf{y} \neq \boldsymbol{0}} \frac{ (\sum_i \vert \sigma_i y_i \vert^2)^{\frac{1}{2}}}{(\sum_i \vert y_i \vert^2)^{\frac{1}{2}}},
    \end{gather}
    where we used the change of variable $\mathbf{y} = \mathbf{V}^{\mathsf{T}}\mathbf{x}$, and $\sigma_i$ is the $i$-th singular value of $\mathbf{L+\Psi}$. Finally,
    \begin{gather}
        \nonumber
        \frac{(\sum_i \vert \sigma_i y_i \vert^2)^{\frac{1}{2}}}{(\sum_i \vert y_i \vert^2)^{\frac{1}{2}}} \geq \sigma_{\text{min}}(\mathbf{L+\Psi})\\
        \nonumber
        \rightarrow \inf_{\mathbf{x} \neq \boldsymbol{0}} \frac{\Vert \mathbf{(L+\Psi)x} \Vert_2}{\Vert \mathbf{x} \Vert_2} = \sigma_{\text{min}}(\mathbf{L+\Psi}),
    \end{gather}
    were $\sigma_{\text{min}}(\mathbf{L+\Psi}) = \sigma_N(\mathbf{L+\Psi})$.
\end{proof}

Using Eqn. \ref{eqn:inequality_max_singular_value}, and lemma \ref{lma:inequality_min_singular_value} we have:
\begin{gather}
    \nonumber
    \sigma_{\text{min}}(\mathbf{L+\Psi}) \leq \frac{\Vert \mathbf{(L+\Psi)x} \Vert_2}{\Vert \mathbf{x} \Vert_2} \leq \sigma_{\text{max}}(\mathbf{\Psi})\\
    \nonumber
    \rightarrow \sigma_{\text{min}}(\mathbf{L+\Psi}) \leq \sigma_{\text{max}}(\mathbf{\Psi})\\
    \nonumber
    1 \leq \frac{\sigma_{\text{max}}(\mathbf{\Psi})}{\sigma_{\text{min}}(\mathbf{L+\Psi})}\\
    \nonumber
    \sigma_{\text{max}}(\mathbf{L+\Psi}) \leq \sigma_{\text{max}}(\mathbf{\Psi})\kappa(\mathbf{L+\Psi})\\
    \label{eqn:lower_bound_condition_number_proof}
    \frac{\sigma_{\text{max}}(\mathbf{L+\Psi})}{\sigma_{\text{max}}(\mathbf{\Psi})} \leq \kappa(\mathbf{L+\Psi}),
\end{gather}
where $\kappa(\mathbf{L+\Psi})=\sigma_{\text{max}}(\mathbf{L+\Psi})/\sigma_{\text{min}}(\mathbf{L+\Psi})$ is the condition number in the $\ell_2$-norm of $\mathbf{L+\Psi}$.

Now, using the triangle inequality of matrix norms
\begin{gather}
    \nonumber
    \Vert \mathbf{L+\Psi} \Vert_2 \leq \Vert \mathbf{L} \Vert_2 + \Vert \mathbf{\Psi} \Vert_2\\
    \nonumber
    \rightarrow \sigma_{\text{max}}(\mathbf{L+\Psi}) \leq \sigma_{\text{max}}(\mathbf{L}) + \sigma_{\text{max}}(\mathbf{\Psi})\\
    \label{eqn:upper_bound_condition_number_proof}
    \kappa(\mathbf{L+\Psi}) \leq \frac{\sigma_{\text{max}}(\mathbf{L})+\sigma_{\text{max}}(\mathbf{\Psi})}{\sigma_{min}(\mathbf{L+\Psi})}.
\end{gather}

Finally, using Eqn. \ref{eqn:lower_bound_condition_number_proof} and \ref{eqn:upper_bound_condition_number_proof} we have:
\begin{equation}
    \frac{\sigma_{\text{max}}(\mathbf{L+\Psi})}{\sigma_{\text{max}}(\mathbf{\Psi})} \leq \kappa(\mathbf{L+\Psi}) \leq \frac{\sigma_{\text{max}}(\mathbf{L})+\sigma_{\text{max}}(\mathbf{\Psi})}{\sigma_{min}(\mathbf{L+\Psi})}.
    \label{eqn:lower_upper_bound_condition_number}
\end{equation}
\end{proof}

\bibliographystyle{IEEEtran}
\bibliography{bibfile}

\end{document}